\RequirePackage{iftex}
\ifPDFTeX\else
  \PackageWarningNoLine{HACE}{Select pdfLaTeX as the compiler. The Meta font setup requires pdfLaTeX}%
\fi
\RequirePDFTeX

\RequirePackage{fix-cm}
\PassOptionsToPackage{table}{xcolor}
\documentclass{fairmeta}

\usepackage{lmodern}

\usepackage{amsmath,amsfonts,bm}

\def\eqref#1{equation~\ref{#1}}

\def\1{\bm{1}}

\DeclareMathAlphabet{\mathsfit}{\encodingdefault}{\sfdefault}{m}{sl}
\SetMathAlphabet{\mathsfit}{bold}{\encodingdefault}{\sfdefault}{bx}{n}

\usepackage{adjustbox,array,tabularx,threeparttable}
\usepackage{pifont,flafter,needspace,xurl,enumitem}
\usepackage{amssymb,amsthm,longtable}
\renewcommand{\sfdefault}{optimistic}

\hypersetup{
  pdftitle={Heterogeneous Agentic Coevolution},
  pdfauthor={Wenqian Ye, Ziwei Guan, Zi Yang, Shivani Modi, Cici Cong Wu,
    Yunlong He, Yuxin Chen, Zhengyu Zhang, Liang Luo, Venkatesh Ranganathan,
    Jianhui Sun, Yantao Yao, Wenlin Chen, Buyun Zhang, Ellie Dingqiao Wen,
    Henry Kautz, Aidong Zhang},
  pdfsubject={Heterogeneous Agentic Coevolution},
  colorlinks=true,
  linkcolor=metablue,
  citecolor=metablue,
  urlcolor=metablue
}

\renewcommand{\authorformat}[2][]{\mbox{{\sffamily\bfseries #2}\textsuperscript{#1}}}
\renewcommand{\affiliationformat}[2][]{\mbox{\textsuperscript{#1}#2}}

\definecolor{metabg}{HTML}{EDF2F8}
\definecolor{haceaccent}{HTML}{273F5B}
\definecolor{metablue}{HTML}{365F91}
\AtBeginDocument{\pretocmd{\titlelist}{\color{haceaccent}}{}{} }
\titleformat*{\section}{\Large\sffamily\bfseries\color{haceaccent}}
\titleformat*{\subsection}{\large\sffamily\bfseries\color{haceaccent}}
\titleformat*{\subsubsection}{\normalsize\sffamily\bfseries\color{haceaccent}}

\newcommand{\result}[2]{\ensuremath{#1_{\color{black!60}\pm #2}}}
\newcommand{\bestresult}[2]{\ensuremath{\mathbf{#1}_{\color{black!60}\pm #2}}}

\newcommand{\hacetablesetup}{%
  \fontsize{9}{11}\selectfont
  \setlength{\tabcolsep}{2.5pt}%
  \renewcommand{\arraystretch}{1.08}%
}

\definecolor{todocolor}{HTML}{7656A5}
\newif\ifshowtodos
\showtodostrue

\newtheorem{hacetheorem}{Theorem}
\newtheorem{haceproposition}{Proposition}

\theoremstyle{remark}

\theoremstyle{plain}

\title{%
  \raisebox{-2pt}[0pt][0pt]{\includegraphics[width=32pt,trim={68bp 279bp 59bp 355bp},clip]{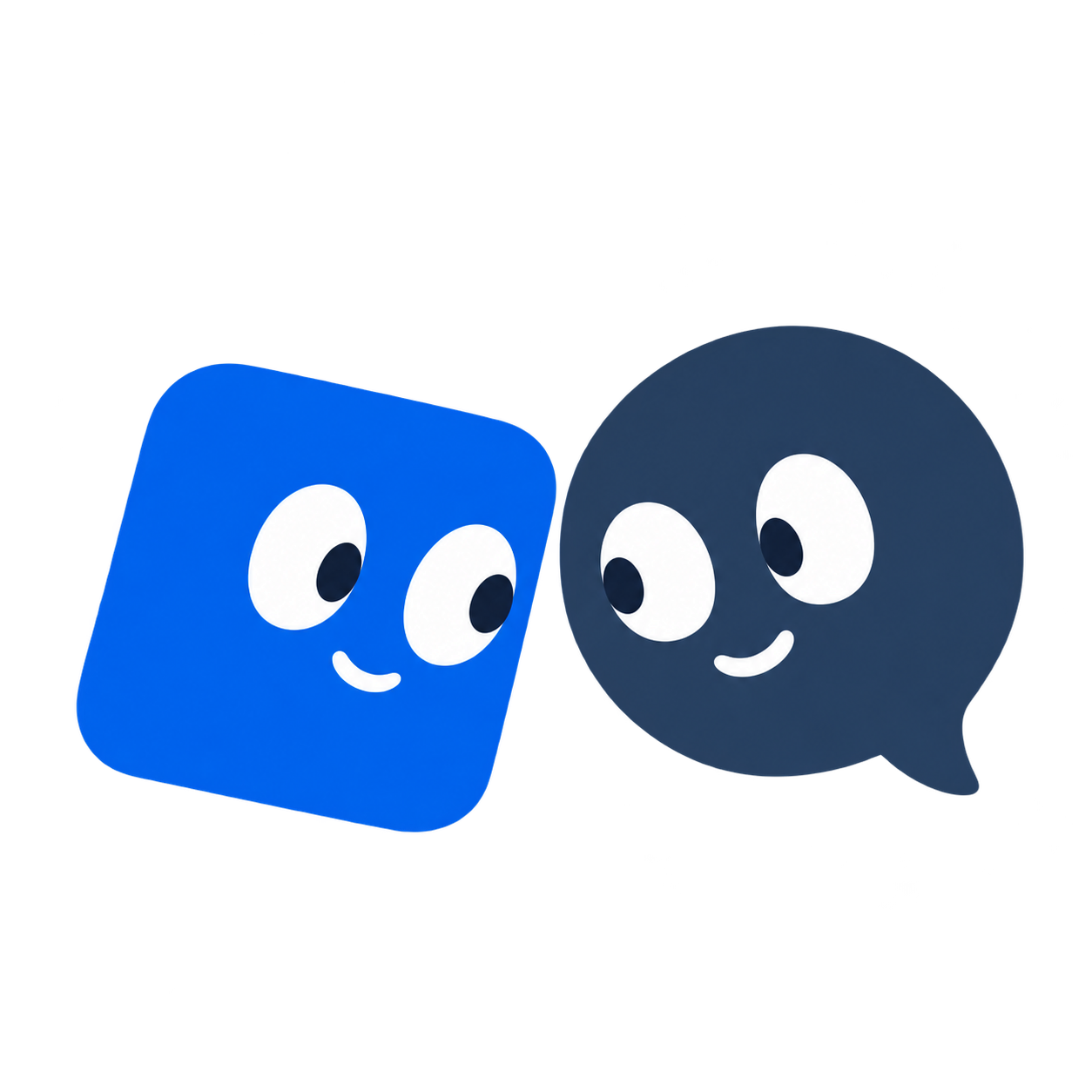}}\hspace{6pt}%
  \textcolor[HTML]{0064E0}{H}eterogeneous \textcolor[HTML]{0064E0}{A}gentic \textcolor[HTML]{0064E0}{C}o\textcolor[HTML]{0064E0}{e}volution}

\author[1,2,*]{Wenqian Ye}
\author[1]{Ziwei Guan}
\author[1]{Zi Yang}
\author[1]{Shivani Modi}
\author[1]{Cici Cong Wu}
\author[1]{Yunlong He}
\author[1]{Yuxin Chen}
\author[1]{Zhengyu Zhang}
\author[1]{Liang Luo}
\author[1]{Venkatesh Ranganathan}
\author[1]{Jianhui Sun}
\author[1]{Yantao Yao}
\author[1]{Wenlin Chen}
\author[1,\textdagger]{Buyun Zhang}
\author[1,\textdagger]{Ellie Dingqiao Wen}
\author[2,\textdagger]{Henry Kautz}
\author[2,\textdagger]{Aidong Zhang}

\affiliation[1]{Meta AI}
\affiliation[2]{University of Virginia}
\contribution[*]{Work done at Meta}

\correspondence{\mbox{Wenqian Ye (\email{wenqian@virginia.edu})}, \mbox{Buyun Zhang (\email{buyunz@meta.com})}}
\abstract{Recent advances in self-improving LLM agents have demonstrated the potential of AI systems that drive their own intelligence. However, many domain-specific applications rely on \textit{specialized foundation models} that possess strong domain expertise but remain dependent on human-designed improvements. Extending recursive self-improvement to these models presents a key challenge: as a specialist improves, the strategies used to advance it must also adapt. To address this challenge, we propose \textbf{Heterogeneous Agentic CoEvolution (HACE)}, a framework that jointly evolves specialized foundation models and LLM agents through reciprocal learning. HACE improves the specialist through supervised learning while training the agent with on-policy reinforced self-training, guided by verified performance gains on the current specialist. The agent's exploration, in turn, informs subsequent specialist training, enabling both models to adapt to each other's progress. To sustain this reciprocal learning, HACE further employs evolutionary search to optimize their interaction harness, adapting how the models exchange information and select training experiences as their capabilities evolve. We evaluate HACE across three challenging domains: recommendation systems, protein function prediction in biology, and crystal stability prediction in material discovery. We show that HACE outperforms the evaluated self-improving methods on the primary task metrics across all three domains. These results shed light on heterogeneous coevolution as a universal approach to improving specialized foundation models through reciprocal interaction with a learning agent.}

\begin{document}
\maketitle
\begingroup
\renewcommand{\thefootnote}{\fnsymbol{footnote}}
\footnotetext[2]{These authors jointly supervised this work}
\endgroup

\begingroup
\setlength{\intextsep}{6pt plus 2pt minus 1pt}
\begin{figure}[!ht]

    \centering
    \includegraphics[width=.97\linewidth]{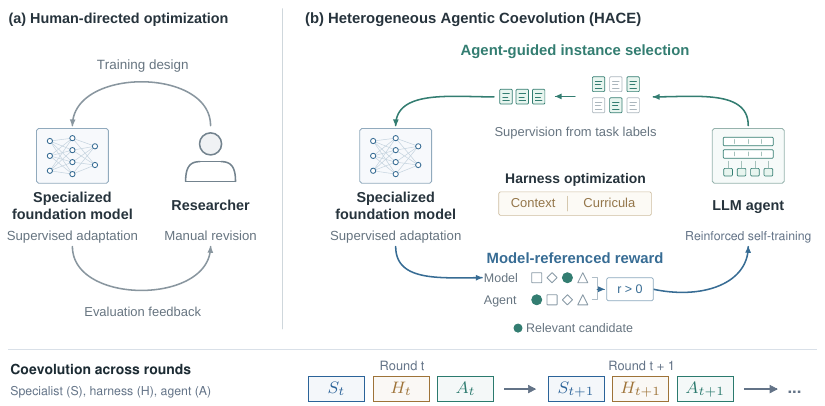}
    \caption{From human-directed optimization to heterogeneous agentic coevolution. HACE brings together \textit{specialized foundation models} and \textit{LLM agents} into a shared learning process, where each helps determine how the other improves. Across rounds, both learners and their interaction harness evolve, creating new opportunities for reciprocal improvement.}
    \label{fig:teaser}

\end{figure}
\endgroup
\clearpage

\section{Introduction}
Lately, large language model (LLM) agents are beginning to automate the experimental work behind model development. Agents can revise their implementations, improve their interaction harnesses, and modify the procedures that generate further improvements \citep{zhang2026dgm,lee2026metaharness,zhang2026hyperagents}. They can also continue training on experience collected during adaptation and exploration \citep{zweiger2025seal,yuksekgonul2026discover}. These advances suggest that model development itself can become an autonomous learning process. Extending this progress to scientific AI requires bringing the domain expertise of specialized foundation models into this process.

In scientific AI, specialized foundation models generally learn reusable domain knowledge through large-scale pretraining. Protein language models learn biological regularities from sequences \citep{lin2023esm2}, while pretrained interatomic potential models learn the energies and forces governing material behavior \citep{deng2023chgnet}. In recommendation, architectures such as HSTU \citep{zhai2024hstu} and Wukong \citep{zhang2024wukong} extend this scaling approach to large collections of user interactions. These models acquire domain expertise that often \textit{exceeds} that of general LLM agents due to specific designs. Yet they still cannot self-evolve like LLM agents. 
Once supervised training converges and further training yields no improvement, researchers have to manually identify useful designs to improve the specialized foundation model, such as better feature engineering or algorithmic changes, as shown in Figure~\ref{fig:teaser}. 

Recent developments in agentic systems pave a feasible path to automate this process. Their ability to interpret feedback, explore alternatives, and revise executable procedures allows them to influence how another model learns. For example, Eureka  \citep{ma2024eureka} demonstrates this possibility by using LLMs to design reward programs for training robot policies autonomously. However, most self-evolving systems still rely on proprietary LLMs whose parameters remain fixed while agent code and interaction harnesses evolve \citep{zhang2026dgm,lee2026metaharness,zhang2026hyperagents}. The experience collected during improvement thus does not update the underlying language model. For specialized foundation models, the challenge is for the LLM agent model to keep improving the specialized model as the specialized model advances. An intervention that improves the initial model can become redundant after further training. An agent that learns to address the initial model's weaknesses, therefore, faces a changing task as those weaknesses are resolved. Continued progress requires the agent to learn continually with the model when it improves. We refer to this mechanism as \textit{reciprocal learning}.


To close the gap, we propose a novel framework,  \textbf{Heterogeneous Agentic Coevolution (HACE)}, which coevolves a specialized foundation model (i.e., the specialist) with a heterogeneous LLM agent which learns to guide the coevolution. The key idea is to improve the specialist through supervision while also training the agent to keep advancing it under an adaptive curriculum. Agent learning is guided by verified gains over the current specialist. The agent's exploration, in turn, guides the selection of experiences for subsequent specialist training. As the specialist improves, it changes which agent contributions are useful. As the agent improves, it changes the learning opportunities available to the specialist. This reciprocal feedback loop seamlessly connects supervised specialist learning with on-policy reinforced self-training. 

We need to ensure that these models interact and develop their capabilities in a continuously improving direction. What the agent sees and which experiences reach specialist training determine the opportunities that either model can use. Thus, we also propose an evolutionary interaction harness in HACE that governs information exchange and training experience selection. Evolutionary search evaluates harness changes through their effects on task performance and specialist learning. A better harness can expose opportunities that the current models have not utilized, while model updates change which harness designs remain effective. HACE thus improves the specialist while learning how to organize its continued improvement. Domain expertise, agentic capabilities, and the process connecting their learning coevolve together.

To show the effectiveness, we comprehensively evaluate HACE in three challenging domains, including recommendation systems, protein function prediction in biology, and crystal stability prediction in material discovery. In each domain, we begin with specialists already \textit{trained to convergence}, where continued training under the original supervised procedure yields no further improvement. This setting stress-tests whether interaction with an evolving agent can create opportunities for learning beyond that plateau. Across diverse specialist architectures and large-scale datasets, HACE achieves state-of-the-art performance and outperforms all evaluated evolution methods. These results demonstrate that HACE enables specialists to continue improving beyond supervised convergence. HACE brings their existing domain expertise into autonomous self-improvement while developing the agentic capabilities needed to advance it. 

\textbf{To summarize, our main contributions are as follows:}

\begin{itemize}[leftmargin=1.5em, labelsep=0.5em]
    \item We introduce HACE, an evolutionary framework for autonomous self-improvement that brings specialized foundation models and LLM agents into a shared coevolution process.
    \item We propose a novel reciprocal learning mechanism in which verified agent contributions guide agent learning, agent exploration also informs specialist training, and an evolving harness adaptively coordinates their reciprocal interaction.
    \item We empirically demonstrate the effectiveness of HACE across three challenging domains: recommendation, protein function prediction, and material discovery. The results show that HACE can substantially push the domain-specific performance beyond the capability boundary.
\end{itemize}
\section{Related Work}

\paragraph{Agent and harness optimization.}
Recent work uses execution feedback to improve agent programs, interaction harnesses, and the procedures that propose further improvements \citep{hu2025adas,zhang2026dgm,lee2026metaharness,luo2026autodesign,zhang2026hyperagents,cemri2026adaevolve,liu2026evox}. Other methods improve the underlying model through parameter updates. Reinforced self-training learns from responses selected by task rewards \citep{singh2024beyond}, while self-adaptation and test-time learning use generated experience or task feedback for further training \citep{zweiger2025seal,yuksekgonul2026discover}. HACE connects harness search with parameter learning in both a specialized foundation model and an LLM agent. The harness controls their interaction and the experience used to train each model.

\paragraph{Coevolution and reciprocal learning.}
Coevolution can be traced to reciprocal evolutionary responses between interacting populations in biology \citep{janzen1980coevolution}. In machine learning, many methods have connected the development of reward programs, environments, or optimization procedures with the learners they guide \citep{ma2024eureka,liu2026escher,wang2019poet,dennis2020emergent,chen2026coharness,yang-etal-2026-coevolve,li2026heterogeneous}. HACE studies this dependence between models with different capabilities and learning signals. Agent exploration can change which instances train the specialized model, whose predictions in turn determine which agent responses pass verification. Our analysis characterizes this dependence through the distribution of verified experience and identifies when an update to either learner changes its allocation across instances.

\paragraph{Agent distillation.}
Agent distillation transfers reasoning and tool use from LLM agents to smaller language models \citep{liu2026structured,kang2025agentdistillation}. MAD-OPD derives supervision from teachers that debate over the student's current trajectories \citep{wang2026madopd}. These methods update the student while keeping teacher parameters fixed. HACE is \textit{distinct} from this line of work and takes a step forward, where an LLM agent with lower overall domain performance can \textit{still} improve on the specialized foundation model. Agent exploration guides instance selection for specialist training, while verified contributions guide response selection back for agent learning. Both models can therefore coevolve reciprocally without assigning either a fixed teacher role.

\section{Heterogeneous Agentic Coevolution}
\label{sec:hace_method}

In this section, we formally present Heterogeneous Agentic Coevolution (HACE), a framework for reciprocal self-improvement between specialized foundation models and LLM agents. Figure~\ref{fig:overview} illustrates how an evolving harness connects specialist adaptation, joint selection, and agent learning within one coevolution loop. In Section~\ref{sec:hace_setup}, we describe how specialist predictions can guide agent exploration and how the resulting experience selects instances for specialist training with original task labels. In Section~\ref{sec:hace_selection}, we incorporate the proposed harness together with the adapted specialist, then use the retained pair to identify fresh responses for on-policy agent learning. These updates allow each learner to shape the training experience of the other precisely. In Section~\ref{sec:hace_theory}, we establish a source risk bound that relates specialist learning to the coverage of accepted experience within the coevolution.

\begin{figure}[!htb]
    \centering
    \includegraphics[width=\linewidth]{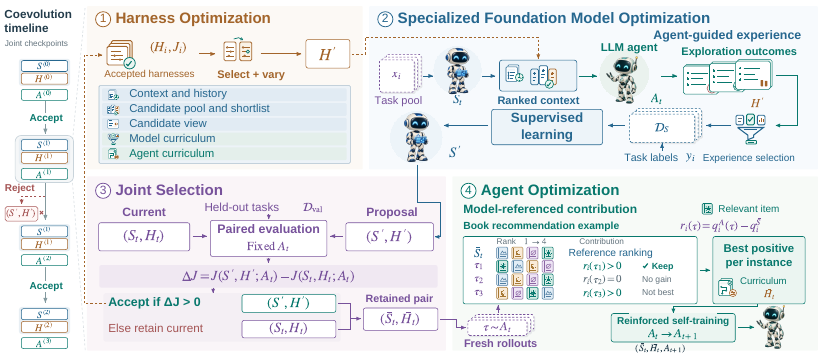}
    \caption{Overview of HACE. Agent exploration selects specialist training experience. Joint validation retains the specialist and harness, after which fresh verified responses train the agent and shape later exploration. The on-policy agent optimization step uses reinforced self-training.}
    \label{fig:overview}
\end{figure}

\subsection{Reciprocal Learning}
\label{sec:hace_setup}
\label{sec:hace_harness}
\label{sec:hace_model}

Let $S$ denote the specialist, $A$ the agent, and $H$ the harness. Round $t=0,1,\ldots$ starts from $(S_t,H_t,A_t)$. We propose a harness $H'$ by modifying a parent from an archive of accepted harnesses. Its fields control the task information and specialist context shown to the agent, how responses are interpreted, and which examples train each learner. Let $\mathcal I$ index the finite nonempty training pool of labeled instances $(x_i,y_i)$.

Under $H'$, the agent receives a prompt constructed from a training input $x_i$ and the predictions of $S_t$, without access to its label $y_i$. We evaluate each response $\tau$ against $y_i$ using the same task constraints as the specialist. Let $q_i^A(\tau)$ and $q_i^S$ denote their respective task qualities, with larger values preferred. The response contribution is
\begin{equation}
r_i(\tau)=q_i^A(\tau)-q_i^S.
\label{eq:hace_contribution}
\end{equation}
A response passes verification when its prediction is valid, supervision is evaluable, and $r_i(\tau)>0$. An agent with lower average task quality can still provide verified responses on individual instances. The specialist and harness used in each collection determine both quality terms. Appendix~\ref{app:hace_tasks} specifies the task quality and verification rules.

The harness also determines which prediction candidates the agent can access. Additional sampling cannot recover a relevant candidate excluded from that view. Changing the view can therefore create opportunities that further sampling under the same harness would miss, while the curriculum determines which exploration outcomes receive the specialist's training budget.

The harness uses these outcomes to select a nonempty set of training indices $\mathcal I_t^S\subseteq\mathcal I$. Its curriculum can prioritize verified improvements, unsuccessful exploration, or uncertain joint predictions, with uniform sampling as an option. Write $S_\phi$ for the specialist with trainable parameters $\phi$, which may belong to the foundation model or its prediction head. A fixed optimization budget starting from $S_t$ produces $S'$ using
\begin{equation}
\mathcal L_{S,t}(\phi)=\frac{1}{|\mathcal I_t^S|}\sum_{i\in\mathcal I_t^S}
\ell_{\mathrm{task}}(S_\phi(x_i),y_i),
\label{eq:hace_model_loss}
\end{equation}
where $\ell_{\mathrm{task}}$ is the original supervised task loss. Agent exploration determines where to train, while task labels determine what the specialist learns. The harness thus affects both current predictions and the specialist produced by adaptation. We evaluate $H'$ and $S'$ \textbf{together after specialist adaptation} to account for both effects.

\subsection{Joint Selection}
\label{sec:hace_selection}
\label{sec:hace_agent}

Joint selection tests whether specialist adaptation under the harness improves joint predictions. For a candidate $c$ in the view for instance $i$, let $s_i(c)$ be its specialist score after any stored calibration, with larger values preferred, and $a_i(c)$ its agent score. Joint prediction ranks candidates by
\begin{equation}
s_i(c)+\lambda a_i(c),\qquad
\lambda\geq0.
\label{eq:hace_fusion}
\end{equation}
Here $a_i(c)=1/\log_2(u+1)$ when $c$ appears at position $u$ in the agent ranking, and zero otherwise. The weight $\lambda$ controls the agent's influence, with zero recovering the specialist ordering within the same view.

Training, validation, and final evaluation use disjoint splits. With $A_t$ fixed, we compare the proposed and current pairs on the same validation batch $\mathcal B_t$. Let $J_{\mathcal B_t}(S,H,A)$ be the mean joint task quality for state $(S,H,A)$. Both states use the same two folds, fitting their own weights from a fixed grid on one fold and evaluating on the other, then exchanging roles. The proposal is accepted when
\begin{equation}
\Delta J_t=J_{\mathcal B_t}(S',H',A_t)-J_{\mathcal B_t}(S_t,H_t,A_t)>0.
\label{eq:hace_acceptance}
\end{equation}
Acceptance retains $(S',H')$ and records the harness and score. Otherwise, we restore $(S_t,H_t)$ and the specialist training state. Write $(\bar S_t,\bar H_t)$ for the pair retained after this decision.

The retained pair now defines the context and reference for agent learning. A specialist update can change which responses improve on its predictions, and a harness update can change the information available to the agent. We therefore collect fresh responses with $A_t$ under $(\bar S_t,\bar H_t)$, keep the original response with the largest positive contribution for each eligible instance, and apply the agent curriculum. This makes agent supervision depend on the current specialist's remaining opportunities for improvement.

Let $\mathcal Z_t$ be the indexed dataset of selected prompt and response concatenations, with repeated records counted separately. For a record $z$, let $|z|$ count its nonpadding tokens, $z_\ell$ be its token at position $\ell$, and $z_{<\ell}$ its preceding prefix. Write $p_\theta(z_\ell\mid z_{<\ell})$ for the agent's next-token probability before decoding transformations, with trainable parameters $\theta$. For nonempty $\mathcal Z_t$, on-policy reinforced self-training~\citep{singh2024beyond} minimizes
\begin{equation}
\mathcal L_{A,t}(\theta)=-\frac{1}{|\mathcal Z_t|}\sum_{z\in\mathcal Z_t}
\frac{1}{|z|-1}\sum_{\ell=2}^{|z|}\log p_\theta(z_\ell\mid z_{<\ell}).
\label{eq:hace_agent_loss}
\end{equation}
Each record has at least two tokens. The loss averages predicted tokens within each record, including prompt and response tokens, then averages records. Contribution scores select training responses without weighting this loss. The dataset stays fixed during fitting, and an empty dataset leaves the agent unchanged. Agent learning follows either selection outcome, giving the next state $(S_{t+1},H_{t+1},A_{t+1})=(\bar S_t,\bar H_t,A_{t+1})$. Its updated responses guide exploration in the next round.

\subsection{Learning Guarantee}
\label{sec:hace_theory}

The training distribution can change with model updates and harness selection. To study learning under that changing distribution, we theoretically analyze accepted exposure and the final specialist's loss on the resulting experience. Let $\mu_i\geq0$, with $\sum_{i\in\mathcal I}\mu_i=1$, be fixed source probabilities. Fix a finite nonempty specialist class $\mathcal F$ before exploration and a fixed loss $\ell_i(f)\in[0,1]$ for each $i\in\mathcal I$ and $f\in\mathcal F$. This analysis loss is independent of the harness and may differ from $\ell_{\mathrm{task}}$. The specialist's source risk is $R_\mu(f)=\sum_{i\in\mathcal I}\mu_i\ell_i(f)$.

Let $\mathcal H_t$ be the observed history before round $t$, and let $W_{t,i}$ be the accepted training weight of instance $i$. A nonempty accepted update assigns total weight one uniformly to newly selected instances. All other weights are zero. These weights are known by the end of the round. After $T$ rounds, define cumulative expected accepted exposure and loss by
\begin{equation}
C_T(i)=\sum_{t<T}\mathbb E[W_{t,i}\mid\mathcal H_t],\qquad
\widehat L_T(f)=\sum_{t<T}\sum_{i\in\mathcal I}W_{t,i}\ell_i(f),
\label{eq:hace_exposure}
\end{equation}
where $t<T$ denotes rounds $0,\ldots,T-1$. In $\widehat L_T(S_T)$, all accepted instances are evaluated at the final specialist $S_T\in\mathcal F$, which may depend on the observed history. For $v>0$, let $U_T(v)=\sum_{i:C_T(i)<v\mu_i}\mu_i$ be the source mass with insufficient exposure.

\begin{hacetheorem}[Learning from Evolving Experience]
\label{thm:hace_learning}
For any fixed $\delta\in(0,1)$, with probability at least $1-\delta$ over the learning process, every final specialist $S_T\in\mathcal F$ satisfies, simultaneously for all integers $T\geq1$ and all $v>0$,
\begin{equation}
R_\mu(S_T)\leq U_T(v)+
\frac{\widehat L_T(S_T)+\log(|\mathcal F|/\delta)}{(1-e^{-1})v}.
\label{eq:hace_adaptive_risk}
\end{equation}
On the same event, $R_\mu(S_T)\to0$ whenever $v_T>0$ satisfies $v_T\to\infty$, $U_T(v_T)\to0$, and $\widehat L_T(S_T)/v_T\to0$.
\end{hacetheorem}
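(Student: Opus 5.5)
The plan is an exponential supermartingale argument for each fixed $f\in\mathcal F$, a union bound over $\mathcal F$, and then a deterministic step that converts exposure into source risk. The constant $(1-e^{-1})$ points to the inequality $e^{-x}\le 1-(1-e^{-1})x$ for $x\in[0,1]$. This follows from convexity of $e^{-x}$ on $[0,1]$.

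\textbf{Step 1 (supermartingale).} Fix $f\in\mathcal F$ and set $X_t(f)=\sum_i W_{t,i}\ell_i(f)$. The weights of a round sum to at most one and every loss lies in $[0,1]$, so $X_t(f)\in[0,1]$ and $X_t(f)$ is $\mathcal H_{t+1}$-measurable. Convexity gives
\[
\mathbb E\big[e^{-X_t(f)}\mid\mathcal H_t\big]\le 1-c\,m_t(f)\le e^{-c\,m_t(f)},
\]
where $c=1-e^{-1}$ and $m_t(f)=\sum_i\mathbb E[W_{t,i}\mid\mathcal H_t]\ell_i(f)$. Hence
\[
M_T(f)=\exp\Big(c\sum_{t<T}m_t(f)-\sum_{t<T}X_t(f)\Big)
\]
is a nonnegative supermartingale with $M_0=1$. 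Ville's maximal inequality gives $\Pr(\sup_T M_T(f)\ge |\mathcal F|/\delta)\le\delta/|\mathcal F|$. A union bound over $\mathcal F$ then yields one event of probability at least $1-\delta$. On that event, for all $T\ge1$ and all $f\in\mathcal F$ simultaneously,
\[
c\sum_{i}C_T(i)\ell_i(f)\le \widehat L_T(f)+\log(|\mathcal F|/\delta).
\]
The identity $\sum_{t<T}m_t(f)=\sum_i C_T(i)\ell_i(f)$ uses the fact that $f$ is fixed before exploration.

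\textbf{Step 2 (deterministic conversion).} Fix $v>0$ and split $R_\mu(f)$ according to whether $C_T(i)<v\mu_i$. The underexposed part is at most $U_T(v)$ because $\ell_i\le1$. For the remaining instances, $\mu_i\le C_T(i)/v$, so their contribution is at most $\frac1v\sum_i C_T(i)\ell_i(f)$. Substituting the bound from Step 1 gives \eqref{eq:hace_adaptive_risk} for every $f\in\mathcal F$ and every $v>0$. No union over $v$ is needed, since $v$ enters only through this deterministic step.

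\textbf{Step 3 (final specialist and limit).} The event was built uniformly over all of $\mathcal F$ and all $T$, so the bound applies to the data-dependent choice $f=S_T$. This is where finiteness of $\mathcal F$ and the union bound are essential. For the limit, take $v=v_T$. The right-hand side of \eqref{eq:hace_adaptive_risk} becomes $U_T(v_T)+\widehat L_T(S_T)/(cv_T)+\log(|\mathcal F|/\delta)/(cv_T)$. Each of the three terms tends to zero under the stated conditions.

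The main obstacle is Step 1: getting a bound that holds uniformly in time and remains valid under adaptive, history-dependent weights. The conditional expectation must be taken with respect to $\mathcal H_t$ while $X_t$ is realized by round's end. Using Ville's inequality rather than a fixed-$T$ Markov bound is what gives the ``simultaneously for all $T$'' claim.
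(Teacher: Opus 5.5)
Your proposal is correct and matches the paper's proof step for step: the same convexity bound $e^{-x}\le 1-(1-e^{-1})x$, the same exponential supermartingale $M_T(f)$, a union bound over $\mathcal F$ at threshold $|\mathcal F|/\delta$, and the same deterministic split on $C_T(i)<v\mu_i$ before substituting the adaptive $S_T$. The only difference is cosmetic: you cite Ville's maximal inequality, whereas the paper cites it and then proves it inline with a stopped-process argument.
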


The bound separates reaching useful training instances from learning and retaining their information. Harness search, agent exploration, and joint selection determine accepted exposure. Specialist optimization determines the final loss on that experience, including whether later updates preserve earlier learning. Sufficient exposure across the source distribution and small loss relative to that exposure yield a small risk bound, even when selection depends on the full preceding history. Appendix~\ref{app:hace_coverage} gives the complete proof. Proposition~\ref{prop:hace_verification_gain} explains how agent fitting and reference changes affect verification before curriculum selection and acceptance determine exposure.

\section{Experiments}
\label{sec:experiments}

In this section, we empirically evaluate whether HACE can push the capability boundary with the converged specialized foundation models across three important domains (recommendation systems, protein function prediction, and material discovery), compared with other self-improving methods.  

\paragraph{Experimental setups.}
By default, we use Gemma4-E4B \citep{gemmateam2026gemma4} as the LLM agent. We also conduct more experiments with various LLM agents (Gemma4-E2B, Gemma4-26B-A4B) in Section~\ref{sec:scaling}. On the specialized foundation models, for recommendation tasks, we evaluate on MovieLens 1M (ML-1M) \citep{harper2015movielens} and four Amazon categories \citep{mcauley2015amazon} using three mainstream foundation models SASRec \citep{kang2018sasrec}, HSTU \citep{zhai2024hstu}, and Wukong \citep{zhang2024wukong}. For protein function prediction, we use frozen ESM-2 \citep{lin2023esm2} and ProtT5-XL \citep{elnaggar2022prottrans} backbones on CAFA \citep{jiang2016cafa}. For crystal stability prediction in material discovery task, we use CHGNet \citep{deng2023chgnet} on WBM \citep{wang2021wbm,riebesell2025matbench}. 
Base denotes the initial specialists, which are already trained to convergence before coevolution. Unless stated otherwise, reported scores apply the fusion in Equation~\ref{eq:hace_fusion} with the weight selected by cross fitting, for Base and for all methods alike. At round 0 the agent is untrained and cross fitting selects a zero weight.
Table~\ref{tab:studied_models} shows that HACE is agnostic to the specialist architectures and their adaptation scope, including causal attention models, a factorization network, protein language models, and a graph neural potential model.

We compare against popular self-improving methods, including ReST-EM \citep{singh2024beyond}, Meta-Harness \citep{lee2026metaharness}, OpenEvolve \citep{sharma2025openevolve}, DGM \citep{zhang2026dgm}, and TTT-Discover \citep{yuksekgonul2026discover}. The recommendation task performs ranking on candidate items, the protein prediction task predicts the function terms, and the materials discovery task ranks candidate crystal structures by stability within a defined chemical system. These experiments measure improvements in prediction with their pretrained representations.

\begin{table}[t]
\centering
\caption{Main results on recommendation across five datasets and three architectures. Entries report HR@10 $\uparrow$ with standard deviations in subscript. \textbf{Bold} marks the best mean in each row.}
\label{tab:main_recsys}
\hacetablesetup
\setlength{\tabcolsep}{1.3pt}
\begin{tabular*}{\linewidth}{@{\extracolsep{\fill}}l*{7}{c}@{}}
\toprule
Dataset & Base & ReST-EM & Meta-Harness & OpenEvolve & DGM & TTT-Discover & \textbf{HACE} \\
\midrule
\multicolumn{8}{@{}l}{\textbf{SASRec} } \\
\addlinespace[1pt]
ML-1M & \result{0.2557}{.0012} & \result{0.2492}{.0022} & \result{0.2491}{.0017} & \result{0.2421}{.0004} & \result{0.2488}{.0018} & \result{0.2473}{.0013} & \bestresult{0.2632}{.0013} \\
Beauty & \result{0.0599}{.0004} & \result{0.0597}{.0010} & \result{0.0631}{.0007} & \result{0.0638}{.0003} & \result{0.0634}{.0005} & \result{0.0633}{.0004} & \bestresult{0.0672}{.0004} \\
CDs & \result{0.0576}{.0005} & \result{0.0587}{.0010} & \result{0.0604}{.0004} & \result{0.0611}{.0002} & \result{0.0592}{.0008} & \result{0.0587}{.0007} & \bestresult{0.0647}{.0006} \\
Sports & \result{0.0315}{.0003} & \result{0.0325}{.0001} & \result{0.0323}{.0002} & \result{0.0331}{.0004} & \result{0.0333}{.0003} & \result{0.0323}{.0002} & \bestresult{0.0356}{.0003} \\
Toys & \result{0.0596}{.0001} & \result{0.0648}{.0007} & \result{0.0657}{.0008} & \result{0.0662}{.0008} & \result{0.0661}{.0009} & \result{0.0646}{.0008} & \bestresult{0.0706}{.0003} \\
\midrule
\multicolumn{8}{@{}l}{\textbf{HSTU}} \\
\addlinespace[1pt]
ML-1M & \result{0.2604}{.0003} & \result{0.2563}{.0005} & \result{0.2569}{.0007} & \result{0.2573}{.0005} & \result{0.2572}{.0006} & \result{0.2588}{.0013} & \bestresult{0.2685}{.0003} \\
Beauty & \result{0.0626}{.0003} & \result{0.0669}{.0013} & \result{0.0673}{.0013} & \result{0.0668}{.0013} & \result{0.0687}{.0004} & \result{0.0695}{.0003} & \bestresult{0.0731}{.0003} \\
CDs & \result{0.0657}{.0002} & \result{0.0654}{.0006} & \result{0.0656}{.0008} & \result{0.0656}{.0010} & \result{0.0658}{.0006} & \result{0.0673}{.0003} & \bestresult{0.0706}{.0002} \\
Sports & \result{0.0348}{.0003} & \result{0.0346}{.0005} & \result{0.0349}{.0004} & \result{0.0351}{.0003} & \result{0.0372}{.0001} & \result{0.0372}{.0002} & \bestresult{0.0394}{.0001} \\
Toys & \result{0.0673}{.0001} & \result{0.0727}{.0006} & \result{0.0732}{.0003} & \result{0.0726}{.0002} & \result{0.0746}{.0007} & \result{0.0740}{.0004} & \bestresult{0.0788}{.0004} \\
\midrule
\multicolumn{8}{@{}l}{\textbf{Wukong}} \\
\addlinespace[1pt]
ML-1M & \result{0.2559}{.0020} & \result{0.2612}{.0008} & \result{0.2573}{.0013} & \result{0.2582}{.0003} & \result{0.2610}{.0013} & \result{0.2607}{.0013} & \bestresult{0.2703}{.0020} \\
Beauty & \result{0.0734}{.0002} & \result{0.0810}{.0004} & \result{0.0805}{.0004} & \result{0.0795}{.0004} & \result{0.0788}{.0004} & \result{0.0772}{.0004} & \bestresult{0.0856}{.0002} \\
CDs & \result{0.0726}{.0005} & \result{0.0771}{.0002} & \result{0.0798}{.0004} & \result{0.0780}{.0004} & \result{0.0785}{.0004} & \result{0.0770}{.0004} & \bestresult{0.0842}{.0005} \\
Sports & \result{0.0388}{.0005} & \result{0.0408}{.0002} & \result{0.0398}{.0002} & \result{0.0390}{.0002} & \result{0.0410}{.0002} & \result{0.0395}{.0002} & \bestresult{0.0438}{.0007} \\
Toys & \result{0.0759}{.0002} & \result{0.0785}{.0004} & \result{0.0752}{.0004} & \result{0.0793}{.0004} & \result{0.0820}{.0004} & \result{0.0770}{.0004} & \bestresult{0.0866}{.0006} \\
\bottomrule
\end{tabular*}
\end{table}

\paragraph{Evaluation metrics.}
Hit Rate (HR@10) measures the fraction of recommendation queries with the target item among the first ten predictions. NDCG uses binary relevance, discounts each relevant candidate at rank $r$ by $1/\log_2(r+1)$, and normalizes by the ideal ranking within the same reference pool. We average NDCG@5 and NDCG@10 on CAFA, and NDCG@3 and NDCG@5 on WBM according to the baseline implementations. For protein function prediction, $F_{\max}$ is the maximum harmonic mean of precision and recall over prediction thresholds \citep{jiang2016cafa}. For material discovery, we use $F_1$ to measure their harmonic mean for identifying stable structures \citep{riebesell2025matbench}. Higher values indicate better performance for these metrics. Appendix~\ref{app:evaluation_metrics} defines all metrics in detail, including the additional measures reported there. Tables report means with standard deviations on three random seeds. Joint selection and fusion calibration only use the split validation data to avoid data contamination, as defined in Section~\ref{sec:hace_selection}.

\subsection{Main Comparison}
\label{sec:main_comparison}

\begin{table}[t]
\centering
\caption{Main results on protein function prediction (CAFA) and material discovery (WBM). Higher is better for all metrics. Full scientific metrics appear in Tables~\ref{tab:full_protein} and~\ref{tab:full_materials}.}
\label{tab:main_science}
\label{tab:main_protein}\label{tab:main_materials}
\hacetablesetup
\begin{tabular*}{\linewidth}{@{\extracolsep{\fill}}l*{6}{c}@{}}
\toprule
& \multicolumn{2}{c}{CAFA / ESM-2} & \multicolumn{2}{c}{CAFA / ProtT5-XL} & \multicolumn{2}{c}{WBM / CHGNet} \\
\cmidrule(lr){2-3}\cmidrule(lr){4-5}\cmidrule(l){6-7}
Method & NDCG $\uparrow$ & $F_{\max}\uparrow$ & NDCG $\uparrow$ & $F_{\max}\uparrow$ & NDCG $\uparrow$ & $F_1\uparrow$ \\
\midrule
Base & \result{0.6285}{.0019} & \result{0.2218}{.0014} & \result{0.6241}{.0011} & \result{0.2221}{.0005} & \result{0.6388}{.0011} & \result{0.5159}{.0003} \\
ReST-EM & \result{0.6286}{.0018} & \result{0.2219}{.0014} & \result{0.6242}{.0011} & \result{0.2221}{.0005} & \result{0.6496}{.0006} & \result{0.5508}{.0004} \\
Meta-Harness & \result{0.6309}{.0015} & \result{0.2224}{.0013} & \result{0.6261}{.0017} & \result{0.2229}{.0006} & \result{0.6513}{.0008} & \result{0.5510}{.0004} \\
OpenEvolve & \result{0.6320}{.0018} & \result{0.2230}{.0012} & \result{0.6254}{.0014} & \result{0.2230}{.0003} & \result{0.6486}{.0008} & \result{0.5463}{.0003} \\
DGM & \result{0.6325}{.0019} & \result{0.2219}{.0014} & \result{0.6273}{.0016} & \result{0.2235}{.0006} & \result{0.6496}{.0007} & \result{0.5508}{.0003} \\
TTT-Discover & \result{0.6287}{.0018} & \result{0.2219}{.0014} & \result{0.6241}{.0011} & \result{0.2221}{.0005} & \result{0.6474}{.0007} & \result{0.5504}{.0003} \\
\midrule
\textbf{HACE} & \bestresult{0.6745}{.0016} & \bestresult{0.2438}{.0019} & \bestresult{0.6739}{.0013} & \bestresult{0.2317}{.0005} & \bestresult{0.6541}{.0010} & \bestresult{0.5513}{.0004} \\
\bottomrule
\end{tabular*}
\end{table}

\paragraph{Recommendation.}
HACE achieves the highest mean HR@10 in all 15 settings in Table~\ref{tab:main_recsys}. Relative gains over Base range from $2.9\%$ to $18.5\%$, averaging $12.1\%$ across settings. On ML-1M, several evolution methods fall below the initial SASRec and HSTU models, while HACE improves all three architectures. On the four Amazon categories, Wukong has the strongest initial scores among the three architectures, yet HACE further improves its HR@10 by $12.9\%$ to $16.6\%$. Contribution rewards train the agent against the current specialist throughout this process.

\paragraph{Protein function prediction.}
Table~\ref{tab:main_science} shows improvements with both protein backbones. HACE raises NDCG from $0.6285$ to $0.6745$ with ESM-2 and from $0.6241$ to $0.6739$ with ProtT5-XL. Existing evolution methods improve over Base by at most $0.0040$ and $0.0032$, respectively, compared with $0.0460$ and $0.0498$ for HACE. HACE also leads in $F_{\max}$, AUPR, and Recall@10. These gains cover both the ordering of function terms and their classification. Table~\ref{tab:full_protein} also reports semantic distance, for which DGM has the lowest mean on ProtT5-XL. The ablations below examine how these gains depend on harness search and specialist optimization.

\paragraph{Material Discovery.}
On WBM, all competing methods improve NDCG over Base, from $0.6388$ to at least $0.6474$. HACE reaches $0.6541$, exceeding the strongest competing mean by $0.0028$. HACE also achieves the best mean on all four additional metrics in Table~\ref{tab:full_materials}. Removing specialist optimization reduces NDCG to $0.6395$, close to Base at $0.6388$. The contribution of supervised specialist updates is therefore particularly clear on this task. Supervised adaptation improves numerical predictions, while the ranking objective favors recovering stable candidates near the top.

\subsection{Ablation Studies}
\label{sec:ablation}

\begin{table}[!htbp]
\centering
\caption{Ablation studies on recommendation (Toys/SASRec), protein function prediction (CAFA/ESM-2 650M), and material discovery (WBM/CHGNet). Metrics are HR@10 for Toys and mean NDCG for CAFA and WBM. }
\label{tab:ablation}
\hacetablesetup
\setlength{\tabcolsep}{7pt}
\begin{tabular}{@{}lccc@{}}
\toprule
Leave-one-out component & Toys & CAFA & WBM \\
\midrule
\textbf{HACE}
& \bestresult{0.0706}{.0003}
& \bestresult{0.6745}{.0016}
& \bestresult{0.6541}{.0010} \\
\midrule
w/o harness optimization
& \result{0.0658}{.0005}
& \result{0.6307}{.0019}
& \result{0.6500}{.0012} \\

w/o specialist optimization
& \result{0.0641}{.0005}
& \result{0.6310}{.0022}
& \result{0.6395}{.0012} \\

w/o agent optimization
& \result{0.0622}{.0006}
& \result{0.6520}{.0016}
& \result{0.6502}{.0011} \\

w/o joint selection
& \result{0.0650}{.0006}
& \result{0.6531}{.0020}
& \result{0.6497}{.0013} \\

w/o agent-guided experience
& \result{0.0662}{.0005}
& \result{0.6575}{.0026}
& \result{0.6507}{.0012} \\
\bottomrule
\end{tabular}
\end{table}

We conduct leave-one-out ablation studies in Table~\ref{tab:ablation} in three domains. Removing agent optimization causes the largest drop on Toys, from $0.0706$ to $0.0622$. On CAFA, removing harness or specialist optimization reduces NDCG by $0.0438$ and $0.0435$, respectively. Harness search controls the information exposed to the agent and the experience selected for adaptation, while specialist optimization learns from that experience using task labels. Agent optimization matters most on Toys and specialist optimization on WBM, indicating that their contributions vary by task. Removing agent-guided experience lowers performance even when specialist optimization remains enabled, by $0.0044$ on Toys, $0.0170$ on CAFA, and $0.0034$ on WBM. This decrease supports the role of agent exploration in selecting training experience, in addition to its role in prediction. Removing joint selection also reduces all three scores. Together, we show that all designed components contribute positively to the framework.

\subsection{Agent and Model Scaling}
\label{sec:scaling}

\begin{table}[!htbp]
\centering
\caption{Scaling experiment on protein function prediction task. NDCG averages
cutoffs 5 and 10. Base denotes the initial specialists, which are already trained to convergence before coevolution; Appendix~\ref{app:experimental_details} states the criterion we use. Agent variants use a shared 30-round configuration.
Subscripts report standard deviations of within-run gains on three random seeds.}
\label{tab:model_scaling}
\hacetablesetup
\setlength{\tabcolsep}{7pt}
\begin{tabular}{@{}lccc@{}}
\toprule
Model & Base & HACE & Reported $\Delta$ \\
\midrule
\multicolumn{4}{@{}l}{\textbf{Specialist size}\quad Fixed agent Gemma-4 E4B} \\

ESM-2 35M       & $0.6283$ & $0.6692$ & $+0.0409_{\color{black!50}\pm 0.0007}$ \\
ESM-2 150M      & $0.6211$ & $0.6633$ & $+0.0422_{\color{black!50}\pm 0.0004}$ \\
ESM-2 650M      & $0.6285$ & $0.6745$ & $+0.0460_{\color{black!50}\pm 0.0006}$ \\
ESM-2 3B        & $0.6339$ & $0.6763$ & $+0.0424_{\color{black!50}\pm 0.0007}$ \\
\midrule
\multicolumn{4}{@{}l}{\textbf{Agent variant}\quad Fixed specialist ESM-2 650M} \\

Gemma-4 E2B     & $0.6285$ & $0.6297$ & $+0.0012_{\color{black!50}\pm 0.0018}$ \\
Gemma-4 E4B     & $0.6285$ & $0.6745$ & $+0.0460_{\color{black!50}\pm 0.0006}$ \\
Gemma-4 26B-A4B & $0.6285$ & $0.6339$ & $+0.0054_{\color{black!50}\pm 0.0014}$ \\
\bottomrule
\end{tabular}
\end{table}

In the specialist sweep with the agent fixed, HACE improves prediction with all four ESM-2 sizes in Table~\ref{tab:model_scaling}, from 35M to 3B parameters. The absolute NDCG gains remain between $0.0409$ and $0.0460$. The largest final score comes from the 3B model, while the largest gain occurs at 650M. The 35M model after HACE also exceeds the initial 3B model, with NDCG of $0.6692$ versus $0.6339$. 
In the agent comparison, E4B achieves the largest mean gain of $0.0460$, compared with $0.0012$ for E2B and $0.0054$ for 26B-A4B. These are within-run gains from a separate matched 30-round sweep with ESM-2 650M, detailed in Appendix~\ref{app:experimental_details}. The gains do not increase monotonically with agent size, suggesting that a larger agent does not necessarily contribute more to the specialist in this setting.

\subsection{Evolution Dynamics}
\label{sec:evolution_analysis}


\begingroup
\setlength{\intextsep}{6pt plus 2pt minus 2pt}
\begin{figure}[!htb]
\centering
\includegraphics[width=.90\linewidth]{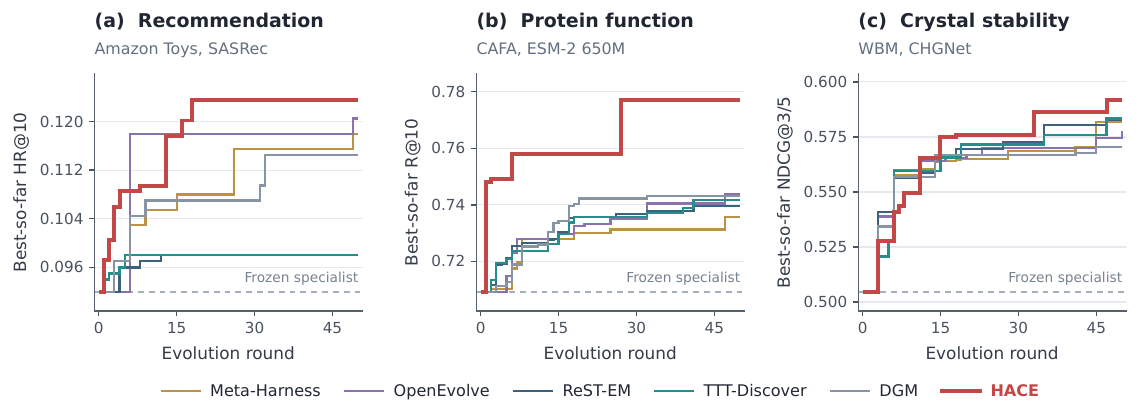}
\caption{Evolution dynamics of three domains. Best recorded scores during evolution, measured by HR@10, Recall@10, and mean NDCG@3 and NDCG@5. Dashed lines mark the frozen specialist.}
\label{fig:evolution_source}
\end{figure}

\paragraph{Improvement over rounds.}
Figure~\ref{fig:evolution_source} shows HACE reaching the highest score in all three recorded trajectories. The best recorded recommendation score rises mostly in early rounds, while protein and materials reach new maxima after longer intervals. In HACE, each retained specialist and harness defines the starting point for the next round. Fresh agent responses are verified against the current specialist. This update order makes agent supervision depend on the retained specialist. The trajectories show when higher scoring configurations are found during search.

\begin{figure}[!htb]
\centering
\includegraphics[width=.95\linewidth]{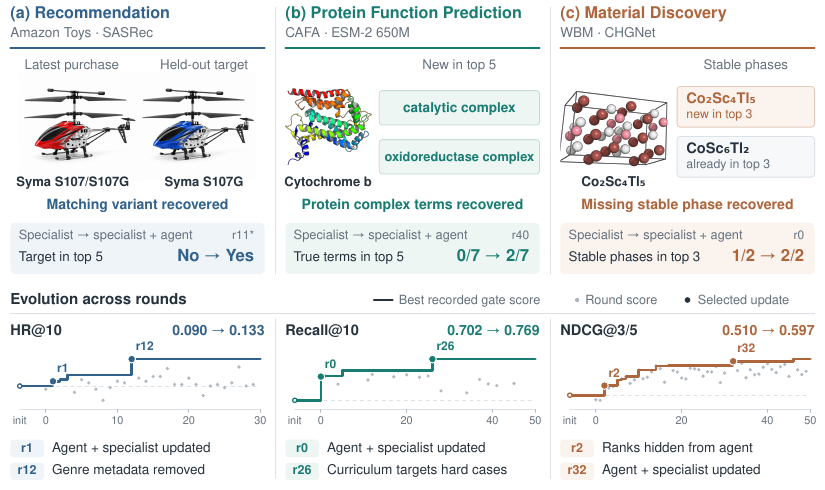}
\caption{Illustrations on events during coevolution. Upper panels compare specialist and combined predictions. Lower panels show recorded selection scores and their running maxima from one run in each domain, with selected updates annotated.}
\label{fig:case_study}
\end{figure}
\endgroup

\paragraph{Case studies on three domains.}
Figure~\ref{fig:case_study} compares specialist and combined predictions on three domains. HACE recovers a matching product variant, places two relevant protein terms in the top five, and brings a second stable phase into the top three. Recorded changes include removing genre metadata, targeting difficult training instances, and hiding candidate ranks from the agent. These examples illustrate how the harness search and coevolution change both the context used for prediction and the experience used for reciprocal learning.

\section{Conclusion}
\label{sec:conclusion}

We introduced HACE, a novel framework for reciprocal self-improvement between specialized foundation models and LLM agents. By evolving both the learners and their interaction with an updating harness, HACE makes each model a source of learning for the other. Our analysis identifies sufficient conditions for specialist learning from adaptively selected experience. Experiments in three domains demonstrate improvements beyond initially converged specialists across different architectures. These results show how heterogeneous AI models can support each other's development. Future work can study sustained coevolution, collaboration among multiple specialists, and autonomous model development for scientific discovery.



\FloatBarrier
\bibliographystyle{assets/plainnat}
\bibliography{bibs/wenqian}

\clearpage
\beginappendix

\section{Limitations}

HACE coevolves a single specialized foundation model with a single LLM agent. We do not study coevolution among multiple specialists or agents, which may require different mechanisms for interaction and learning. The compared methods differ in which components they can update. Some support parameter updates, others evolve the harness or agent program with the specialist held fixed, and several assume a closed-weight LLM. We follow each method's original formulation rather than introducing specialist adaptation beyond its design. These comparisons therefore evaluate the methods as complete systems, while Table~\ref{tab:ablation} separately examines the contribution of each HACE component. Our findings are based on the evaluated domains, architectures, and computational budgets. Further work can study how long coevolution sustains useful learning and how to allocate computation among harness search, specialist adaptation, and agent training.

\section{Proofs and Implementation Details}
\label{app:hace_theory}

Throughout this appendix, $\log$ is the natural logarithm and an explicitly written $\log_2$ uses base two. For a finite set or indexed dataset, $|B|$ counts its elements or records, while $|z|$ counts tokens. The indicator $\mathbf 1\{B\}$ is one when condition $B$ holds and zero otherwise. Empty sums are zero and empty products are one. Unless specified otherwise, $\sum_i$ means $\sum_{i\in\mathcal I}$ and $\sum_{t<T}$ means $\sum_{t=0}^{T-1}$.

\subsection{Specialist Learning}
\label{app:hace_coverage}

Let $\mathcal D_{\mathrm{train}}=\{(x_i,y_i):i\in\mathcal I\}$ be the labeled training pool defined in Section~\ref{sec:hace_setup}. Equation~\ref{eq:hace_model_loss} gives the supervised objective on the selected training indices $\mathcal I_t^S$. The specialist's state includes its trainable parameters $\phi$ and stored score calibration. Adaptation starts from $S_t$ and uses a fixed optimization budget.

When supervised replay is used, the following term is added to Equation~\ref{eq:hace_model_loss}, with both terms using original task labels,
\begin{equation}
\mathcal R_{\mathrm{replay},t}(\phi)=\sum_{i\in\mathcal I}\gamma_{t,i}
\ell_{\mathrm{task}}(S_\phi(x_i),y_i).
\label{eq:hace_replay_loss}
\end{equation}
The nonnegative weights $\gamma_{t,i}$ are fixed during the specialist update and are zero unless instance $i$ is retained from earlier specialist training. Their sum sets the total replay strength, and equal positive weights give uniform replay over the retained instances. Setting all weights to zero disables replay. This defines the term without assuming a particular retention policy.

For the proof of Theorem~\ref{thm:hace_learning}, condition on the fixed source pool and initial configuration. The source probabilities $\mu_i$, specialist class $\mathcal F$, and losses $\ell_i(f)$ are fixed before exploration. The analysis loss may differ from the optimization loss $\ell_{\mathrm{task}}$. Interpret $\mathcal H_t$ as the sigma field of information available before round $t$. These histories increase with $t$, and $\mathcal H_{t+1}$ includes that round's proposal, responses, specialist training, validation decision, and agent update.

The accepted weights are measurable with respect to $\mathcal H_{t+1}$, are nonnegative, and satisfy $\sum_iW_{t,i}\leq1$. Under the uniform allocation used here, $W_{t,i}=1/|\mathcal I_t^S|$ on selected instances in a nonempty accepted update and zero otherwise. Conditional expectation given $\mathcal H_t$ averages over the remaining round randomness, allowing dependence between selection and acceptance. The final specialist may depend on the full observed history. Replay may affect its final loss but is not counted as additional exposure. We use $C_T(i)$, $\widehat L_T(f)$, and $U_T(v)$ as defined in Section~\ref{sec:hace_theory}.

\begin{proof}[Proof of Theorem~\ref{thm:hace_learning}]
Fix a predictor $f\in\mathcal F$ before the random exploration history. Define its accepted loss in round $t$ and the conditional mean of that loss by
\begin{equation}
X_t(f)=\sum_iW_{t,i}\ell_i(f),\qquad
m_t(f)=\mathbb E[X_t(f)\mid\mathcal H_t].
\label{eq:hace_one_round_loss}
\end{equation}
The weight and loss bounds give $0\leq X_t(f)\leq1$ and $0\leq m_t(f)\leq1$. Since the task losses are fixed, linearity of conditional expectation gives
\begin{equation}
\sum_{t<T}m_t(f)=\sum_iC_T(i)\ell_i(f),\qquad
\sum_{t<T}X_t(f)=\widehat L_T(f).
\label{eq:hace_predictable_loss}
\end{equation}

We first compare the conditional expected losses with their realized sum, uniformly over rounds. For $0\leq x\leq1$, convexity gives $e^{-x}\leq1-(1-e^{-1})x$. Taking conditional expectations and using $1-u\leq e^{-u}$ yields
\begin{equation}
\mathbb E[e^{-X_t(f)}\mid\mathcal H_t]
\leq1-(1-e^{-1})m_t(f)
\leq e^{-(1-e^{-1})m_t(f)}.
\label{eq:hace_conditional_exponential}
\end{equation}
Define $M_0(f)=1$ and
\begin{equation}
M_T(f)=\exp\left((1-e^{-1})\sum_{t<T}m_t(f)-\sum_{t<T}X_t(f)\right).
\label{eq:hace_supermartingale}
\end{equation}
This process is nonnegative and measurable with respect to $\mathcal H_T$. Equation~\ref{eq:hace_conditional_exponential} implies
\begin{equation}
\begin{aligned}
\mathbb E[M_{T+1}(f)\mid\mathcal H_T]
&=M_T(f)e^{(1-e^{-1})m_T(f)}
\mathbb E[e^{-X_T(f)}\mid\mathcal H_T]\\
&\leq M_T(f).
\end{aligned}
\label{eq:hace_supermartingale_step}
\end{equation}
Thus it is a supermartingale, meaning that its conditional expected next value is at most its current value. This exponential construction gives a probability bound valid over time~\citep{howard2021time}. The following stopping argument establishes that bound directly.

For a threshold $a>1$, let $\sigma_a$ be the first integer $T\geq0$ with $M_T(f)\geq a$, and set $\sigma_a=\infty$ if no such round exists. Whether a crossing has occurred by round $t$ is known from $\mathcal H_t$, so $\sigma_a$ is a stopping time. For a positive integer $n$, write $\sigma_a\wedge n=\min\{\sigma_a,n\}$. The stopped process satisfies
\begin{equation}
M_{\sigma_a\wedge n}(f)
=1+\sum_{t=0}^{n-1}\mathbf 1\{\sigma_a>t\}
\big(M_{t+1}(f)-M_t(f)\big).
\label{eq:hace_stopped_process}
\end{equation}
Each indicator is known under $\mathcal H_t$, so each expected summand is nonpositive by Equation~\ref{eq:hace_supermartingale_step}. Hence $\mathbb E[M_{\sigma_a\wedge n}(f)]\leq1$. The process is at least $a$ whenever a crossing occurs by $n$ and is nonnegative otherwise. Therefore $a\Pr(\sigma_a\leq n)\leq1$. Letting $n\to\infty$ gives
\begin{equation}
\Pr\big(\exists\,T\geq0\text{ with }M_T(f)\geq a\big)\leq\frac1a.
\label{eq:hace_maximal_bound}
\end{equation}
Choose $a=|\mathcal F|/\delta$ and take a union bound over the fixed class. With probability at least $1-\delta$, no predictor's process crosses this threshold at any round. On this event, taking logarithms and applying Equation~\ref{eq:hace_predictable_loss} gives, for all $f\in\mathcal F$ and $T\geq1$,
\begin{equation}
(1-e^{-1})\sum_iC_T(i)\ell_i(f)
\leq\widehat L_T(f)+\log(|\mathcal F|/\delta).
\label{eq:hace_exposure_bound}
\end{equation}
No independence across rounds is used.

This bound controls task risk wherever accepted exposure is sufficiently large. Fix any $v>0$ on this event. Instances with $C_T(i)<v\mu_i$ contribute at most $U_T(v)$ to the source risk, because their losses are at most one. For each remaining instance, $\mu_i\leq C_T(i)/v$. Summing and using Equation~\ref{eq:hace_exposure_bound} gives
\begin{equation}
R_\mu(f)\leq U_T(v)+\frac1v\sum_iC_T(i)\ell_i(f)
\leq U_T(v)+\frac{\widehat L_T(f)+\log(|\mathcal F|/\delta)}{(1-e^{-1})v}.
\label{eq:hace_risk_decomposition}
\end{equation}
This deterministic implication holds for every $v>0$ on the same event. Since that event already covers every predictor and round, we may substitute the adaptive final specialist $S_T\in\mathcal F$. The result also holds at any finite stopping time chosen from the observed history. Treating only the retrospectively selected $S_T$ as fixed would not justify the probability bound.

Finally, for any positive levels $v_T\to\infty$ such that $U_T(v_T)\to0$ and $\widehat L_T(S_T)/v_T\to0$, all terms in the upper bound tend to zero. Nonnegativity then gives $R_\mu(S_T)\to0$. The levels may depend on the observed history because the same event covers every $v$. This conclusion applies to trajectories satisfying these conditions. For a fixed finite class, a zero risk limit requires $\min_{f\in\mathcal F}R_\mu(f)=0$.
\end{proof}

The exposure term connects the bound to the training allocation. When $v$ is proportional to $T$, the term from class size decreases as $1/T$, while the other terms depend on exposure and retained loss. To relate exposure to selection, suppose the nonempty training set has at most $B\geq1$ instances and uses uniform weights. On the joint event that instance $i$ is selected and the proposal is accepted, its weight is at least $1/B$. Thus
\begin{equation}
\mathbb E[W_{t,i}\mid\mathcal H_t]
\geq\frac1B\Pr\big(i\in\mathcal I_t^S,\ \Delta J_t>0\mid\mathcal H_t\big).
\label{eq:hace_inclusion_exposure}
\end{equation}
Summing gives the corresponding lower bound for $C_T(i)$. The joint probability preserves dependence between inclusion and acceptance. A high overall acceptance rate need not expose every instance, and verification need not lead to inclusion after curriculum truncation.

Small losses immediately after each update do not ensure small final loss when later updates forget those instances. Replay may affect this final loss. If acceptance has conditional probability zero in every round, then $C_T(i)=0$ and $U_T(v)=1$, so the bound gives no learning conclusion. Expected exposure is generally not observable from one trace, and $|\mathcal F|$ counts a class fixed before exploration rather than adaptively visited checkpoints.

A population bound requires assumptions on how the source pool is drawn. Let $P$ be a distribution of labeled task instances. Suppose the source pool consists of $n=|\mathcal I|\geq1$ independent draws $Z_i=(x_i,y_i)$ from $P$, indexed by $i=1,\ldots,n$, and set $\mu_i=1/n$. Assume $\mathcal F$ is fixed independently of these draws. For a generic draw $Z\sim P$, let $\ell(f,Z)\in[0,1]$ be a fixed loss and set $\ell_i(f)=\ell(f,Z_i)$. Define
\begin{equation}
R_P(f)=\mathbb E_{Z\sim P}[\ell(f,Z)],\qquad
R_\mu(f)=\frac1n\sum_{i=1}^n\ell(f,Z_i).
\end{equation}
Under the remaining assumptions of Theorem~\ref{thm:hace_learning}, with probability at least $1-\delta$ over the pool and learning process, every final specialist satisfies, simultaneously for all $T\geq1$ and $v>0$,
\begin{equation}
R_P(S_T)\leq U_T(v)
+\frac{\widehat L_T(S_T)+\log(2|\mathcal F|/\delta)}{(1-e^{-1})v}
+\sqrt{\frac{\log(2|\mathcal F|/\delta)}{2n}}.
\label{eq:hace_population_risk}
\end{equation}

\textit{Proof.}
For this calculation, let $\xi$ be a centered random variable supported in an interval of width one and let $\eta\geq0$ be an exponential moment parameter. Define $\psi(\eta)=\log\mathbb E[e^{\eta\xi}]$. Differentiation shows that $\psi''(\eta)$ is the variance of $\xi$ under probabilities reweighted by $e^{\eta\xi}$. This distribution retains the same support, and its variance is at most the expected squared distance to the interval midpoint, hence at most $1/4$. Since $\psi(0)=\psi'(0)=0$, integrating twice gives
\begin{equation}
\psi(\eta)=\int_0^\eta(\eta-s)\psi''(s)\,ds\leq\eta^2/8,
\end{equation}
where $s$ is the integration variable. Apply this inequality to $R_P(f)-\ell(f,Z_i)$ for a fixed $f$. Independence and exponential Markov inequality give, for every $\epsilon>0$ and $\eta>0$,
\begin{equation}
\Pr(R_P(f)-R_\mu(f)\geq\epsilon)
\leq\exp(-\eta n\epsilon+n\eta^2/8).
\end{equation}
Choosing $\eta=4\epsilon$ gives an upper bound of $e^{-2n\epsilon^2}$. Taking $\epsilon=\sqrt{\log(2|\mathcal F|/\delta)/(2n)}$ and a union bound over $\mathcal F$ therefore yields, with probability at least $1-\delta/2$,
\begin{equation}
R_P(f)\leq R_\mu(f)+\sqrt{\frac{\log(2|\mathcal F|/\delta)}{2n}}
\quad\text{for every }f\in\mathcal F.
\label{eq:hace_pool_generalization}
\end{equation}
Conditional on the realized pool and initial configuration, apply Theorem~\ref{thm:hace_learning} with failure probability $\delta/2$. Averaging over the pool and initial configuration keeps its failure probability at most $\delta/2$. A union bound combines this event with Equation~\ref{eq:hace_pool_generalization}, without requiring independence between the events. On their intersection, which has probability at least $1-\delta$, adding the bounds proves Equation~\ref{eq:hace_population_risk} for every $T$ and $v$.

Repeated training on a fixed pool does not reduce the final sample term. Population consistency additionally requires that term to vanish, for example by increasing independent source data while controlling class size. The source risk result permits a class chosen after fixing the pool but before exploration. This population extension requires the stronger independence condition stated above.

\subsection{Agent Learning and Verification}
\label{app:hace_verification}

We next analyze the responses that provide learning experience, conditioning on the collection state and using the fixed source probabilities $\mu_i$. Write $A_\theta$ for the agent language model with trainable parameters $\theta$, including an adapter when used, and let $\theta_t$ be the parameters of $A_t$. For a fixed specialist and harness, $p_i$ is the nonempty prompt constructed from $x_i$ and specialist predictions, excluding $y_i$. The decoded distribution $\pi_\theta(\tau\mid p_i)$ gives the probability of the complete response sequence $\tau$ after decoding and stopping. We collect $m\geq1$ responses per instance, where $m$ is an integer. They are independent conditional on the prompt and any shared task context.

Fix a collection round $t$, its specialist and harness, and the task context used to form each prompt. For an instance $i$ with evaluable supervision, define
\begin{equation}
E_t(i)=\{\tau\in\mathcal T\mid \tau\text{ is valid and }r_i(\tau)>0\},\qquad
\alpha_t(i)=\pi_{\theta_t}(E_t(i)\mid p_i).
\label{eq:hace_verified_response}
\end{equation}
Here $\mathcal T$ is the countable space of finite response token sequences. For any response distribution $\pi$, the notation $\pi(E)$ means $\sum_{\tau\in E}\pi(\tau)$. For an instance without evaluable supervision, set $E_t(i)=\varnothing$ and $\alpha_t(i)=0$.

Let $I$ be the source instance index, with $\Pr(I=i)=\mu_i$, and let $V$ be the event that at least one of its $m$ collected responses passes verification. Conditional independence of those responses gives $\Pr(V\mid I=i)=1-(1-\alpha_t(i))^m$. Therefore, whenever $\Pr(V)>0$, Bayes' rule gives
\begin{equation}
\Pr(I=i\mid V)=
\frac{\mu_i[1-(1-\alpha_t(i))^m]}
{\sum_j\mu_j[1-(1-\alpha_t(j))^m]}.
\label{eq:hace_verified_distribution}
\end{equation}
This describes source sampling followed only by verification. A zero denominator means that no source instance can provide a verified response under this collection procedure.

If the responses share a random context $C$, independence must instead hold conditional on that context. Define $\alpha_t(i,C)$ by Equation~\ref{eq:hace_verified_response} using the prompt and verifier determined by $C$. The collection probability becomes
\begin{equation}
\Pr(V\mid I=i)=\mathbb E\big[1-(1-\alpha_t(i,C))^m\mid I=i\big].
\label{eq:hace_context_collection}
\end{equation}
Equation~\ref{eq:hace_verified_distribution} then uses this conditional expectation in place of each bracketed collection probability. For $m>1$, the function $a\mapsto1-(1-a)^m$ is concave on $[0,1]$, since its second derivative is $-m(m-1)(1-a)^{m-2}$ in the interior. Thus Jensen's inequality yields
\begin{equation}
\Pr(V\mid I=i)\leq1-\big(1-\mathbb E[\alpha_t(i,C)\mid I=i]\big)^m.
\end{equation}
Averaging the response probability before computing the collection probability can therefore overestimate the chance of verification. Equality holds for $m=1$.

We next compare verification before agent fitting, after fitting under the same reference, and after the next specialist and harness selection. Fix one instance and its agent collection context under the retained pair $(\bar S_t,\bar H_t)$, and suppress the instance index. Write $E_t$ for its verified response set, and let $\pi_t$ and $\widetilde\pi_t$ be the decoded response distributions before and after agent fitting under that same prompt, specialist, and harness. Set $\alpha_t=\pi_t(E_t)>0$ and $\widetilde\alpha_t=\widetilde\pi_t(E_t)>0$.

Let $q_t$ be a distribution on selected responses in $E_t$, including their training weights. Every response with positive $q_t$ mass must have positive probability under both $\pi_t$ and $\widetilde\pi_t$. For distributions $q$ and $p$, define $\mathrm{KL}(q\Vert p)=\sum_{\tau:q(\tau)>0}q(\tau)\log[q(\tau)/p(\tau)]$, with value $+\infty$ if $p$ is zero on a response with positive $q$ mass. All divergences and log likelihood expectations below are assumed finite. Define
\begin{equation}
g_t=\mathbb E_{q_t}\log\frac{\widetilde\pi_t(\tau)}{\pi_t(\tau)},\qquad
k_t=\mathrm{KL}\big(q_t\Vert\pi_t(\cdot\mid E_t)\big),
\label{eq:hace_agent_gain_definitions}
\end{equation}
where $\pi_t(\tau\mid E_t)=\pi_t(\tau)/\alpha_t$ on $E_t$ and is zero elsewhere. The conditional distribution after fitting is defined in the same way. Thus $g_t$ measures the mean decoded likelihood gain on selected responses, while $k_t$ measures how their distribution differs from the old distribution conditional on verification.

The next agent collection uses the fitted agent after the next specialist and harness selection. Let $\pi_{t+1}$ and $E_{t+1}$ be its decoded distribution and verified response set for the same instance, and set $\alpha_{t+1}=\pi_{t+1}(E_{t+1})$. These probabilities condition on the realized parameters, specialist, harness, and collection context. Define
\begin{equation}
\eta_t=\mathrm{TV}(\widetilde\pi_t,\pi_{t+1})
+\widetilde\pi_t(E_t\setminus E_{t+1}),
\label{eq:hace_reference_change}
\end{equation}
where $\mathrm{TV}(p,q)=\sup_{B\subseteq\mathcal T}|p(B)-q(B)|$. The two terms measure the change in response distribution and the old probability of responses that no longer pass verification.

\begin{haceproposition}[Agent Verification]
\label{prop:hace_verification_gain}
Under the conditions above,
\begin{equation}
\log\frac{\widetilde\alpha_t}{\alpha_t}
=g_t-k_t+\mathrm{KL}\big(q_t\Vert\widetilde\pi_t(\cdot\mid E_t)\big)
\geq g_t-k_t,
\label{eq:hace_agent_verification_gain}
\end{equation}
and
\begin{equation}
\alpha_{t+1}\geq\big[\alpha_t e^{g_t-k_t}-\eta_t\big]_+,
\label{eq:hace_next_verification}
\end{equation}
where $[x]_+=\max\{x,0\}$. In particular, $g_t>k_t$ is sufficient for improved verification under the fixed reference. The stronger condition $\eta_t<\alpha_t(e^{g_t-k_t}-1)$ is sufficient for $\alpha_{t+1}>\alpha_t$ after the reference changes.
\end{haceproposition}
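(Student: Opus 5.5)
The plan has three parts: an exact identity for the fixed reference, then a total-variation transfer to the next reference, then the two corollaries.

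\textbf{Fixed reference.} Everything rests on the conditional distributions $\pi_t(\cdot\mid E_t)=\pi_t/\alpha_t$ and $\widetilde\pi_t(\cdot\mid E_t)=\widetilde\pi_t/\widetilde\alpha_t$ on $E_t$. Because $q_t$ is supported in $E_t$, on its support
\begin{equation*}
\log\frac{\widetilde\pi_t(\tau)}{\pi_t(\tau)}=\log\frac{\widetilde\alpha_t}{\alpha_t}+\log\frac{\widetilde\pi_t(\tau\mid E_t)}{\pi_t(\tau\mid E_t)}.
\end{equation*}
Take the expectation under $q_t$, and write the last log ratio as
\begin{equation*}
\log\frac{q_t(\tau)}{\pi_t(\tau\mid E_t)}-\log\frac{q_t(\tau)}{\widetilde\pi_t(\tau\mid E_t)}.
\end{equation*}
This gives $g_t=\log(\widetilde\alpha_t/\alpha_t)+k_t-\mathrm{KL}(q_t\Vert\widetilde\pi_t(\cdot\mid E_t))$. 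Rearranged, this is the identity in Equation~\ref{eq:hace_agent_verification_gain}.

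The finiteness assumptions, together with the requirement that $q_t$-charged responses have positive mass under both $\pi_t$ and $\widetilde\pi_t$, make every term finite. They also justify splitting the expectation. The inequality then follows from nonnegativity of KL (Gibbs' inequality), so $\widetilde\alpha_t\geq\alpha_te^{g_t-k_t}$.

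\textbf{Reference change.} Next, write
\begin{equation*}
\alpha_{t+1}=\pi_{t+1}(E_{t+1})\geq\pi_{t+1}(E_t\cap E_{t+1}).
\end{equation*}
By the definition of total variation, $\pi_{t+1}(E_t\cap E_{t+1})\geq\widetilde\pi_t(E_t\cap E_{t+1})-\mathrm{TV}(\widetilde\pi_t,\pi_{t+1})$. Also $\widetilde\pi_t(E_t\cap E_{t+1})=\widetilde\alpha_t-\widetilde\pi_t(E_t\setminus E_{t+1})$. Combining these gives $\alpha_{t+1}\geq\widetilde\alpha_t-\eta_t\geq\alpha_te^{g_t-k_t}-\eta_t$. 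Since $\alpha_{t+1}\geq0$, the positive part can be applied, which is Equation~\ref{eq:hace_next_verification}.

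\textbf{Corollaries.} If $g_t>k_t$, then $\widetilde\alpha_t>\alpha_t$ follows directly from the first bound. If $\eta_t<\alpha_t(e^{g_t-k_t}-1)$, then $\alpha_te^{g_t-k_t}-\eta_t>\alpha_t>0$, so the positive part is inactive and $\alpha_{t+1}>\alpha_t$. This condition is stronger than the first: because $\eta_t\geq0$, it forces $e^{g_t-k_t}>1$.

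\textbf{Main obstacle.} The only step needing care is bookkeeping, not a hard estimate. The measure-theoretic conditions in the identity step must be handled correctly: supports, finiteness, and the countable space $\mathcal T$, so that the sums can be split and the TV supremum applies to the set $E_t\cap E_{t+1}$. I would state these explicitly before carrying out the computation.
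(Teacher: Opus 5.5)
Your proof is correct and follows essentially the same route as the paper. The paper gets the identity by expanding $k_t$ and $\mathrm{KL}(q_t\Vert\widetilde\pi_t(\cdot\mid E_t))$ and subtracting, which is the same algebra as your pointwise decomposition; it then uses nonnegativity of KL and a total-variation step to reach $\alpha_{t+1}\geq\widetilde\alpha_t-\eta_t$, differing only trivially in applying total variation to $E_{t+1}$ before restricting to $E_t\cap E_{t+1}$.
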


\begin{proof}
Expanding the conditional distributions gives
\begin{align*}
k_t&=\mathbb E_{q_t}\log\frac{q_t(\tau)}{\pi_t(\tau)}+\log\alpha_t,\\
\mathrm{KL}\big(q_t\Vert\widetilde\pi_t(\cdot\mid E_t)\big)
&=\mathbb E_{q_t}\log\frac{q_t(\tau)}{\widetilde\pi_t(\tau)}+\log\widetilde\alpha_t.
\end{align*}
Subtracting the first equality from the second proves the identity in Equation~\ref{eq:hace_agent_verification_gain}. To prove its inequality, concavity of the logarithm gives, for any distributions $q$ and $p$ with the stated support and integrability conditions,
\begin{equation}
\mathbb E_q\log\frac{p(\tau)}{q(\tau)}
\leq\log\sum_{\tau:q(\tau)>0}p(\tau)\leq0.
\end{equation}
Hence the KL divergence is nonnegative. Exponentiating yields $\widetilde\alpha_t\geq\alpha_t e^{g_t-k_t}$.

For the next collection, total variation and the decomposition of $E_t$ imply
\begin{align*}
\alpha_{t+1}
&\geq\widetilde\pi_t(E_{t+1})-\mathrm{TV}(\widetilde\pi_t,\pi_{t+1})\\
&\geq\widetilde\alpha_t-\widetilde\pi_t(E_t\setminus E_{t+1})
-\mathrm{TV}(\widetilde\pi_t,\pi_{t+1})\\
&=\widetilde\alpha_t-\eta_t.
\end{align*}
Combining this inequality with the fixed reference bound and probability nonnegativity proves Equation~\ref{eq:hace_next_verification}. The two strict improvement conditions follow directly.
\end{proof}

If $q_t=\pi_t(\cdot\mid E_t)$, then $k_t=0$. Response selection and length weights generally change this distribution. For $m$ conditionally independent responses in the next collection, monotonicity of $a\mapsto1-(1-a)^m$ also gives
\begin{equation}
\begin{aligned}
&\Pr(\text{at least one verified response in round }t+1)\\
&\qquad\geq1-\left(1-\big[\alpha_t e^{g_t-k_t}-\eta_t\big]_+\right)^m.
\end{aligned}
\label{eq:hace_next_collection}
\end{equation}
The probability conditions on the fixed instance, collection state, and context. The lower bound inside the parentheses lies in $[0,1]$ because $\eta_t\geq0$ and $\alpha_t e^{g_t-k_t}\leq\widetilde\alpha_t\leq1$. With random shared context, apply this inequality conditionally and then average as in Equation~\ref{eq:hace_context_collection}.

To connect this result to the training update, we decompose Equation~\ref{eq:hace_agent_loss} into its prompt and response terms. The dataset $\mathcal Z_t$, token indices, and raw next-token probabilities $p_\theta$ are defined in Section~\ref{sec:hace_agent}. The loss includes prompt tokens after the first token and all response tokens present in the record, including a stopping token when present. Contribution scores select records without weighting their losses, and the dataset remains fixed during fitting.

Separating prompt and response tokens shows which likelihood gains this objective measures. Fix a nonempty training dataset $\mathcal Z$ and suppress the update round. Let $\theta$ and $\widetilde\theta$ be the parameters before and after fitting. Its indexed records are $z_j=p_j\Vert\tau_j$ for $j=1,\ldots,|\mathcal Z|$, where $\Vert$ denotes token concatenation. Each record has at least two tokens. Let $w_j=1/(|z_j|-1)$. The prompt $p_j$ is nonempty, and $p_{j,\ell}$ and $p_{j,<\ell}$ denote its token at position $\ell$ and preceding prefix. The same indexing applies to $z_j$. Define the raw response token product
\begin{equation}
P_\theta(\tau_j\mid p_j)
=\prod_{\ell=|p_j|+1}^{|z_j|}p_\theta(z_{j,\ell}\mid z_{j,<\ell}).
\end{equation}
Splitting the predicted tokens at the prompt boundary rewrites Equation~\ref{eq:hace_agent_loss} exactly as
\begin{equation}
\mathcal L_A(\theta)=-\frac1{|\mathcal Z|}\sum_jw_j
\left[\sum_{\ell=2}^{|p_j|}\log p_\theta(p_{j,\ell}\mid p_{j,<\ell})
+\log P_\theta(\tau_j\mid p_j)\right].
\label{eq:hace_loss_split}
\end{equation}
An empty prompt token sum is zero. The first token has no preceding token in the training sequence and is excluded from the implemented loss. Assume that both parameter values assign positive probabilities to every predicted token in these records. The dataset and its length weights remain fixed during this comparison.

Partition the record indices into groups $\mathcal J_u$ with the same source instance, prompt, and verifier, where $u$ ranges over the nonempty groups. Write $p_u$ for their common prompt. Their weighted response distributions and shares of total weight are
\begin{equation}
q_u(\tau)=\frac{\sum_{j\in\mathcal J_u,\,\tau_j=\tau}w_j}{\sum_{j\in\mathcal J_u}w_j},\qquad
\rho_u=\frac{\sum_{j\in\mathcal J_u}w_j}{\sum_jw_j}.
\label{eq:hace_effective_response_distribution}
\end{equation}
Thus $\sum_u\rho_u=1$, and each $q_u$ has finite support consisting of responses that pass one fixed verifier. Subtracting the two losses and collecting response terms by group gives
\begin{equation}
\begin{aligned}
\mathcal L_A(\theta)-\mathcal L_A(\widetilde\theta)
={}&\frac1{|\mathcal Z|}\sum_jw_j\sum_{\ell=2}^{|p_j|}
\log\frac{p_{\widetilde\theta}(p_{j,\ell}\mid p_{j,<\ell})}
{p_\theta(p_{j,\ell}\mid p_{j,<\ell})}\\
&+\frac{\sum_jw_j}{|\mathcal Z|}\sum_u\rho_u
\mathbb E_{q_u}\log\frac{P_{\widetilde\theta}(\tau\mid p_u)}{P_\theta(\tau\mid p_u)}.
\end{aligned}
\label{eq:hace_actual_loss_gain}
\end{equation}
The first term accounts for fitting prompt tokens. The second averages response gains over groups, with larger record weights for shorter concatenations. A lower total loss therefore does not establish an improved response likelihood for each instance.

The proposition uses decoded probabilities $\pi_\theta$, whereas the loss uses raw token probabilities $p_\theta$. If both decoded distributions are positive on the finite support of $q_u$, direct cancellation gives
\begin{equation}
\begin{aligned}
\mathbb E_{q_u}\log\frac{\pi_{\widetilde\theta}(\tau\mid p_u)}{\pi_\theta(\tau\mid p_u)}
={}&\mathbb E_{q_u}\log\frac{P_{\widetilde\theta}(\tau\mid p_u)}{P_\theta(\tau\mid p_u)}\\
&+\mathbb E_{q_u}\log
\frac{\pi_{\widetilde\theta}(\tau\mid p_u)P_\theta(\tau\mid p_u)}
{\pi_\theta(\tau\mid p_u)P_{\widetilde\theta}(\tau\mid p_u)}.
\end{aligned}
\label{eq:hace_decoding_gain}
\end{equation}
The left side is the decoded likelihood gain $g_t$ for that group. The final term is zero when decoding samples directly from the model and the recorded token product contains every generation probability factor, including any stopping token. Temperature scaling, probability truncation, or an omitted stopping factor can change it. If decoding gives zero probability to a selected response, the finite likelihood and divergence conditions do not hold.

Proposition~\ref{prop:hace_verification_gain} explains how agent fitting and later reference changes affect verification. The curriculum then selects specialist training instances, and pair acceptance determines their contribution to the exposure weights in Theorem~\ref{thm:hace_learning}. Increased verification alone does not establish that theorem's exposure or retained loss conditions.

\subsection{Task and Implementation Details}
\label{app:hace_tasks}
\label{app:hace_implementation}

Our experiments use candidate predictions in recommendation, protein function prediction, and material discovery. For an instance $i$, let $\mathcal C_i$ be the fixed reference candidate pool and $Y_i\subseteq\mathcal C_i$ the relevant candidates determined by the original label $y_i$. The harness exposes a view $\mathcal C_i^{S,H}\subseteq\mathcal C_i$. The definitions below apply to training and validation instances alike.

To evaluate an individual response, the parser extracts an ordered list of distinct candidate indices from the view. A response is valid if this list is nonempty. It is then completed with omitted candidates in specialist order. A response with no valid indices receives quality zero without completion. For any ordering $\rho$ of distinct candidates, let $|\rho|$ be its length and $\rho(r)$ the candidate at rank $r$. At a positive integer cutoff $k$, binary normalized discounted cumulative gain is~\citep{jarvelin2002cumulated}
\begin{equation}
\operatorname{NDCG}_k(\rho,Y_i)=
\begin{cases}
\displaystyle\frac{\sum_{r=1}^{\min(k,|\rho|)}\mathbf 1\{\rho(r)\in Y_i\}/\log_2(r+1)}
{\sum_{r=1}^{\min(k,|Y_i|)}1/\log_2(r+1)},&Y_i\neq\varnothing,\\[8pt]
0,&Y_i=\varnothing.
\end{cases}
\label{eq:hace_task_quality}
\end{equation}
For a valid response $\tau$, its quality $q_i^A(\tau)$ is this score on its completed ordering at the harness's verification cutoff $k_H$. The specialist quality $q_i^S$ is the same score on the specialist ordering of the same view. Verification requires $Y_i\neq\varnothing$ and $q_i^A(\tau)>q_i^S$. An empty view has no valid responses. Empty target sets are excluded from verification and calibration fitting, but receive zero in system evaluation.

The normalization uses all relevant candidates in the reference pool, so removing relevant candidates from a view does not lower the denominator. When $Y_i\neq\varnothing$,
\begin{equation}
q_i^A(\tau)\leq
\frac{\sum_{r=1}^{\min(k_H,|Y_i\cap\mathcal C_i^{S,H}|)}1/\log_2(r+1)}
{\sum_{r=1}^{\min(k_H,|Y_i|)}1/\log_2(r+1)}.
\label{eq:hace_candidate_ceiling}
\end{equation}
At most $|Y_i\cap\mathcal C_i^{S,H}|$ relevant candidates are available, and placing them first maximizes the discounted gain. This proves the bound and shows it is attainable by an ordering of the view. If the specialist attains this ceiling, further response sampling in that view cannot produce a positive contribution. Changing the harness can change the available view.

Joint selection orders each view by the combined score in Equation~\ref{eq:hace_fusion}. The agent ranking is the ordered candidate list returned by the evaluation response procedure, and specialist scores include any stored calibration. Each compared state fits $\lambda$ separately over the grid $\{0,0.25,0.5,1,2,4,8\}$. Let $\mathcal K$ be the task's finite nonempty set of positive integer evaluation cutoffs. Let $\mathcal D_{\mathrm{val}}$ and $\mathcal D_{\mathrm{test}}$ be the validation and test pools, both disjoint from $\mathcal D_{\mathrm{train}}$ and from each other. The comparison indices $\mathcal B_t$ refer to a batch from $\mathcal D_{\mathrm{val}}$. Split this batch into the same two nonempty folds for both compared states. On each fitting fold, choose the grid value maximizing mean NDCG averaged over $\mathcal K$, using only instances with nonempty target sets. Use zero if the fitting fold has no such instances. Evaluate on the other fold, then exchange their roles. All parsing, ordering, and tie rules remain fixed during the comparison. At zero weight, the joint ordering agrees with the specialist ordering within the same view.

Let $\rho_i$ be the joint ordering for evaluation instance $i$, formed with the weight fitted on the opposite fold. Then the score in Equation~\ref{eq:hace_acceptance} is
\begin{equation}
J_{\mathcal B_t}(S,H,A)=\frac1{|\mathcal B_t|}\sum_{i\in\mathcal B_t}
\frac1{|\mathcal K|}\sum_{k\in\mathcal K}\operatorname{NDCG}_k(\rho_i,Y_i).
\label{eq:hace_validation_score}
\end{equation}
The orderings depend on the evaluated $(S,H,A)$ and that state's fitted weights. This instance average weights the fold means by fold size and includes empty target sets with their zero scores. Pair selection compares joint predictions, while response verification compares an individual interpreted response with the specialist.


These evaluation rules determine the round procedure. Sample two archived harnesses and select the one with the higher score recorded at acceptance. Use a single entry directly, or the current harness if the archive is empty. Mutate one or two categorical fields to obtain $H'$. Under $(S_t,H',A_t)$, the verified curriculum prioritizes instances with at least one verified response, and the unsuccessful curriculum prioritizes those with none. Both select in pool order, truncate at the training budget, and fill shortfalls from remaining instances. The uncertainty curriculum prefers smaller differences between the two largest scores $s_i(c)+a_i(c)$. Uniform sampling bypasses exploration. The selected instances train $S'$ with Equation~\ref{eq:hace_model_loss}.

With $A_t$ fixed, evaluate the proposed and current pairs using Equation~\ref{eq:hace_validation_score}. Any specialist calibration uses a validation draw disjoint from this comparison batch. Accept a strictly positive improvement and record the harness and score, or restore the specialist parameters, optimizer, calibration, and harness. 

The retained pair then supplies the context for agent learning. Under $(\bar S_t,\bar H_t)$, collect fresh responses with $A_t$. Keep each eligible instance's original response with the largest positive contribution. The agent curriculum retains all such pairs or the half with the largest contributions, keeping at least one when available. The retained original tokens form $\mathcal Z_t$ for Equation~\ref{eq:hace_agent_loss}. An empty dataset leaves the agent unchanged. Set $S_{t+1}=\bar S_t$ and $H_{t+1}=\bar H_t$. The next round starts from $(S_{t+1},H_{t+1},A_{t+1})$. Labels enter specialist supervision, verification, and evaluation, and are excluded from prompts. Validation data are reused across rounds, and test data are reserved for final evaluation.

Figure~\ref{fig:overview} uses $\mathcal D_S$ for the labeled examples indexed by $\mathcal I_t^S$. An archive entry $(H_i,J_i)$ stores a harness and its score at acceptance, where $i$ indexes archive entries in that panel. The figure abbreviates $J_{\mathcal B_t}(S,H,A_t)$ as $J(S,H;A_t)$ and $\Delta J_t$ as $\Delta J$. Sampling $\tau\sim A_t$ means sampling from $\pi_{\theta_t}(\cdot\mid p_i)$ under the retained pair. Superscripts in $S^{(k)}$, $H^{(k)}$, and $A^{(k)}$ count component versions. The specialist and harness advance on acceptance, and the agent advances when it is updated.

\section{Additional Experiments}
\label{app:experiment_records}

\subsection{Evaluation Details}
\label{app:evaluation_metrics}

We fix disjoint training, validation, and test sets before coevolution. Specialist adaptation, agent learning, and training instance selection use only $\mathcal{D}_{\mathrm{train}}$. At each round, we compare the proposed and current specialist and harness pairs on the same batch sampled from $\mathcal{D}_{\mathrm{val}}$, holding the agent fixed. Both pairs use the same two folds, but fit their own $\lambda$ on one fold and are evaluated on the other. We then exchange the folds and average the resulting instance scores for joint selection. Internal specialist calibration uses a separate validation batch disjoint from the comparison batch. Validation data guide selection throughout coevolution, while $\mathcal{D}_{\mathrm{test}}$ is reserved for final evaluation and does not inform training, calibration, or selection.

For recommendation, we partition users across the three sets and train the initial specialist on histories excluding each user's last two interactions. During coevolution, we use the second to last interactions of training users for learning and those of validation users for calibration and selection. Final evaluation uses the last interaction of each test user. For protein function prediction, we follow the CAFA temporal split, restrict inputs to information available at $t_0$, and keep instances sharing a protein name in the same set. For material discovery, we partition by chemical system, keeping all candidate structures from the same system in one set. Below we describe the evaluation metrics.

\paragraph{Ranking metrics.}
Hit rate at cutoff $k$ (HR@$k$) is the fraction of recommendation queries whose held out target item appears among the first $k$ predictions. We report HR@10 in the main comparison. Normalized discounted cumulative gain (NDCG) assigns a binary relevance label to each candidate. For a ranking $\pi$, its discounted cumulative gain is
\begin{equation}
\mathrm{DCG}@k(\pi)=\sum_{r=1}^{\min(k,|\pi|)}\frac{\operatorname{rel}(\pi_r)}{\log_2(r+1)}.
\end{equation}
NDCG@$k$ divides this sum by the largest possible DCG@$k$ within the same fixed reference pool. It rewards both recovering relevant candidates and placing them earlier. The reported CAFA score averages NDCG@5 and NDCG@10, while the WBM score averages NDCG@3 and NDCG@5.

\paragraph{Protein function metrics.}
Precision is the fraction of predicted function terms that are correct, and recall is the fraction of relevant terms recovered. Recall@10 measures the fraction of relevant terms recovered among the first ten predictions. At prediction threshold $t$, their harmonic mean is $F_1(t)=2P(t)R(t)/(P(t)+R(t))$. The maximum score over thresholds is $F_{\max}=\max_t F_1(t)$. The area under the precision--recall curve (AUPR) summarizes the tradeoff between precision and recall across thresholds. Minimum semantic distance ($S_{\min}$) measures errors using the information added by each ontology term given its parents \citep{jiang2016cafa}. At each threshold, remaining uncertainty $\mathrm{ru}(t)$ is the average information in missed terms, and misinformation $\mathrm{mi}(t)$ is the average information in incorrectly predicted terms. The metric is
\begin{equation}
S_{\min}=\min_t\sqrt{\mathrm{ru}(t)^2+\mathrm{mi}(t)^2}.
\end{equation}
Lower values indicate less semantic error.

\paragraph{Materials metrics.}
For crystal stability prediction, $F_1$ is the harmonic mean of precision and recall for the stable class. The discovery acceleration factor (DAF) divides precision by the fraction of stable structures in the evaluation pool \citep{riebesell2025matbench}. A value of one matches the expected precision of random selection. Mean absolute error (MAE) averages the absolute differences between predicted and reference target values. The coefficient of determination ($R^2$) equals one minus the ratio of squared prediction error to the squared error of predicting the mean reference value. It equals one for perfect predictions, zero for this constant baseline, and is negative when predictions have greater squared error. Higher values are better for all metrics except $S_{\min}$ and MAE.

\subsection{Complete Scientific Results}
\begin{table}[!htb]
\centering
\caption{Protein function prediction on CAFA with two frozen backbones.}
\label{tab:full_protein}
\fontsize{9}{10.6}\selectfont
\setlength{\tabcolsep}{4pt}
\renewcommand{\arraystretch}{1.14}
\begin{tabularx}{\textwidth}{l*{5}{>{\centering\arraybackslash}X}}
\toprule
Method & $F_{\max}\uparrow$ & AUPR$\uparrow$ & $S_{\min}\downarrow$ & NDCG$\uparrow$ & Recall@10$\uparrow$ \\
\midrule
\multicolumn{6}{l}{\textbf{ESM-2}} \\

Base & $0.2218_{\color{black!50}\pm 0.0014}$ & $0.1078_{\color{black!50}\pm 0.0012}$ & $12.557_{\color{black!50}\pm 0.060}$ & $0.6285_{\color{black!50}\pm 0.0019}$ & $0.7179_{\color{black!50}\pm 0.0014}$ \\
ReST-EM & $0.2219_{\color{black!50}\pm 0.0014}$ & $0.1079_{\color{black!50}\pm 0.0012}$ & $12.557_{\color{black!50}\pm 0.060}$ & $0.6286_{\color{black!50}\pm 0.0018}$ & $0.7180_{\color{black!50}\pm 0.0012}$ \\
Meta-Harness & $0.2224_{\color{black!50}\pm 0.0013}$ & $0.1076_{\color{black!50}\pm 0.0012}$ & $12.553_{\color{black!50}\pm 0.059}$ & $0.6309_{\color{black!50}\pm 0.0015}$ & $0.7245_{\color{black!50}\pm 0.0007}$ \\
OpenEvolve & $0.2230_{\color{black!50}\pm 0.0012}$ & $0.1087_{\color{black!50}\pm 0.0011}$ & $12.549_{\color{black!50}\pm 0.060}$ & $0.6320_{\color{black!50}\pm 0.0018}$ & $0.7222_{\color{black!50}\pm 0.0016}$ \\
DGM & $0.2219_{\color{black!50}\pm 0.0014}$ & $0.1075_{\color{black!50}\pm 0.0012}$ & $12.557_{\color{black!50}\pm 0.060}$ & $0.6325_{\color{black!50}\pm 0.0019}$ & $0.7261_{\color{black!50}\pm 0.0016}$ \\
TTT-Discover & $0.2219_{\color{black!50}\pm 0.0014}$ & $0.1078_{\color{black!50}\pm 0.0013}$ & $12.557_{\color{black!50}\pm 0.060}$ & $0.6287_{\color{black!50}\pm 0.0018}$ & $0.7185_{\color{black!50}\pm 0.0013}$ \\

\textbf{HACE} & $\mathbf{0.2438}_{\color{black!50}\pm 0.0019}$ & $\mathbf{0.1258}_{\color{black!50}\pm 0.0015}$ & $\mathbf{12.494}_{\color{black!50}\pm 0.048}$ & $\mathbf{0.6745}_{\color{black!50}\pm 0.0016}$ & $\mathbf{0.7615}_{\color{black!50}\pm 0.0015}$ \\
\midrule
\multicolumn{6}{l}{\textbf{ProtT5-XL}} \\

Base & $0.2221_{\color{black!50}\pm 0.0005}$ & $0.1078_{\color{black!50}\pm 0.0002}$ & $13.127_{\color{black!50}\pm 0.074}$ & $0.6241_{\color{black!50}\pm 0.0011}$ & $0.7122_{\color{black!50}\pm 0.0015}$ \\
ReST-EM & $0.2221_{\color{black!50}\pm 0.0005}$ & $0.1079_{\color{black!50}\pm 0.0003}$ & $13.127_{\color{black!50}\pm 0.074}$ & $0.6242_{\color{black!50}\pm 0.0011}$ & $0.7131_{\color{black!50}\pm 0.0016}$ \\
Meta-Harness & $0.2229_{\color{black!50}\pm 0.0006}$ & $0.1085_{\color{black!50}\pm 0.0002}$ & $13.122_{\color{black!50}\pm 0.075}$ & $0.6261_{\color{black!50}\pm 0.0017}$ & $0.7156_{\color{black!50}\pm 0.0020}$ \\
OpenEvolve & $0.2230_{\color{black!50}\pm 0.0003}$ & $0.1082_{\color{black!50}\pm 0.0003}$ & $13.121_{\color{black!50}\pm 0.072}$ & $0.6254_{\color{black!50}\pm 0.0014}$ & $0.7139_{\color{black!50}\pm 0.0015}$ \\
DGM & $0.2235_{\color{black!50}\pm 0.0006}$ & $0.1088_{\color{black!50}\pm 0.0001}$ & $\mathbf{13.118}_{\color{black!50}\pm 0.075}$ & $0.6273_{\color{black!50}\pm 0.0016}$ & $0.7175_{\color{black!50}\pm 0.0016}$ \\
TTT-Discover & $0.2221_{\color{black!50}\pm 0.0005}$ & $0.1079_{\color{black!50}\pm 0.0003}$ & $13.127_{\color{black!50}\pm 0.074}$ & $0.6241_{\color{black!50}\pm 0.0011}$ & $0.7135_{\color{black!50}\pm 0.0016}$ \\

\textbf{HACE} & $\mathbf{0.2317}_{\color{black!50}\pm 0.0005}$ & $\mathbf{0.1177}_{\color{black!50}\pm 0.0010}$ & $13.127_{\color{black!50}\pm 0.072}$ & $\mathbf{0.6739}_{\color{black!50}\pm 0.0013}$ & $\mathbf{0.7553}_{\color{black!50}\pm 0.0012}$ \\
\bottomrule
\end{tabularx}
\end{table}

\begin{table}[!htb]
\centering
\caption{Material discovery task on WBM with CHGNet. }
\label{tab:full_materials}
\fontsize{9}{10.6}\selectfont
\setlength{\tabcolsep}{4pt}
\renewcommand{\arraystretch}{1.14}
\begin{tabularx}{\textwidth}{l*{5}{>{\centering\arraybackslash}X}}
\toprule
Method & NDCG$\uparrow$ & $F_1\uparrow$ & DAF$\uparrow$ &
\shortstack{MAE$\downarrow$} & $R^2\uparrow$ \\
\midrule
Base
& $0.6388_{\color{black!50}\pm 0.0011}$
& $0.5159_{\color{black!50}\pm 0.0003}$
& $1.644_{\color{black!50}\pm 0.003}$
& $0.1049_{\color{black!50}\pm 0.0002}$
& $-1.289_{\color{black!50}\pm 0.001}$ \\

ReST-EM
& $0.6496_{\color{black!50}\pm 0.0006}$
& $0.5508_{\color{black!50}\pm 0.0004}$
& $2.172_{\color{black!50}\pm 0.005}$
& $0.0756_{\color{black!50}\pm 0.0002}$
& $-0.323_{\color{black!50}\pm 0.001}$ \\

Meta-Harness
& $0.6513_{\color{black!50}\pm 0.0008}$
& $0.5510_{\color{black!50}\pm 0.0004}$
& $2.171_{\color{black!50}\pm 0.003}$
& $0.0754_{\color{black!50}\pm 0.0003}$
& $-0.281_{\color{black!50}\pm 0.001}$ \\

OpenEvolve
& $0.6486_{\color{black!50}\pm 0.0008}$
& $0.5463_{\color{black!50}\pm 0.0003}$
& $2.193_{\color{black!50}\pm 0.004}$
& $0.0755_{\color{black!50}\pm 0.0002}$
& $-0.264_{\color{black!50}\pm 0.001}$ \\

DGM
& $0.6496_{\color{black!50}\pm 0.0007}$
& $0.5508_{\color{black!50}\pm 0.0003}$
& $2.183_{\color{black!50}\pm 0.003}$
& $0.0748_{\color{black!50}\pm 0.0003}$
& $-0.251_{\color{black!50}\pm 0.001}$ \\

TTT-Discover
& $0.6474_{\color{black!50}\pm 0.0007}$
& $0.5504_{\color{black!50}\pm 0.0003}$
& $2.203_{\color{black!50}\pm 0.003}$
& $0.0748_{\color{black!50}\pm 0.0002}$
& $-0.284_{\color{black!50}\pm 0.001}$ \\

\textbf{HACE}
& $\mathbf{0.6541}_{\color{black!50}\pm 0.0010}$
& $\mathbf{0.5513}_{\color{black!50}\pm 0.0004}$
& $\mathbf{2.242}_{\color{black!50}\pm 0.004}$
& $\mathbf{0.0729}_{\color{black!50}\pm 0.0003}$
& $\mathbf{-0.203}_{\color{black!50}\pm 0.001}$ \\
\bottomrule
\end{tabularx}
\end{table}

\subsection{More Experimental Details}
\label{app:experimental_details}

\paragraph{Agent and model scaling.}
Table~\ref{tab:model_scaling} varies specialist size and agent size in separate sweeps. ESM-2 backbone representations remain frozen while the prediction head is trained. The agent variants share a 30-round configuration with ESM-2 650M. In this sweep, Base is the initial specialist score, shared across the agent variants, and $\Delta$ is the reported mean within-run gain. The specialist sweep uses a separate configuration. Subscripts report the standard deviation of the gains. The HACE means for ESM-2 35M, 150M, and 3B are obtained by adding the reported Base means and gains. Both the specialist sweep and all agent variants use three random seeds.

\paragraph{Convergence on base.} We begin with specialists already trained to convergence under their original schedules. Continued supervised training alone does not by itself indicate which further updates help, and our ablations show that the acceptance gate in Equation~\ref{eq:hace_acceptance} is what makes continued training productive. This setting tests whether interaction with an evolving agent can create additional opportunities beyond that plateau. We define convergence as follows: continuing the original supervised schedule for an additional $3$ epochs changes the reported metric by less than $0.001$ in every domain.

\paragraph{Evolution records.}
The lower panels of Figure~\ref{fig:case_study} show selection scores from one recorded run in each domain. Grey points are the available per-round scores, and the lines track their running maxima. The recommendation trajectory covers 30 rounds, while the protein and materials trajectories cover 50 rounds. The CAFA record contains scores for 18 rounds. Missing entries are omitted. These trajectories describe the scores observed during selection, while the main tables report aggregate evaluation results. The recommendation case was collected in a separate logging run with the same configuration, seed, and data as its trajectory. The upper panels compare specialist and combined predictions on the same instance. The protein example is recorded at round 40 and the materials example at round 0.

\paragraph{Hyperparameters.}
We sample 192 training instances per round. For each instance, the agent generates $m$ responses with temperature 1.0 and top-$p$ 0.95, where $m=8$ for recommendation and crystal stability and $m=4$ for protein function. The generation batch size is 96. Prompts exceeding 2560 tokens are truncated by retaining the first 853 and the last 1707 tokens. For joint selection, we sample $|\mathcal{B}_t|=\min(256,|\mathcal{D}_{\mathrm{val}}|)$ validation instances. We train the agent with LoRA adapters ($r=16$, $\alpha=32$) using AdamW with a learning rate of $10^{-5}$, gradient clipping at 1.0, and a micro-batch size of 2 sequences. Specialist adaptation in Equation~\ref{eq:hace_model_loss} uses 200 steps for recommendation and crystal stability and 750 steps for protein function, all at a learning rate of $10^{-4}$.

\paragraph{Hardware.} All experiments are conducted on NVIDIA B200 GPU clusters, where each node contains 8 GPUs. 

\begin{table}[!htb]
\centering
\caption{HACE is agnostic to specialist architectures and adaptation on different learning components. }
\label{tab:studied_models}
\begingroup
\fontsize{8.8}{10.3}\selectfont
\setlength{\tabcolsep}{4pt}
\renewcommand{\arraystretch}{1.13}
\begin{tabularx}{\linewidth}{@{}l l >{\raggedright\arraybackslash}X l@{}}
\toprule
Domain & Model & Architecture & Updated parameters \\
\midrule
Recommendation & SASRec & Causal self-attention Transformer & Model parameters \\
& HSTU & Causal sequential transducer with pointwise aggregated attention & Model parameters \\
& Wukong & Stacked factorization-machine network & Model parameters \\
\midrule
Protein & ESM-2 & Encoder-only protein Transformer & Prediction head \\
& ProtT5-XL & Protein T5 with encoder-decoder pretraining & Prediction head \\
\midrule
Materials & CHGNet & Charge-informed graph neural potential & Model parameters \\
\bottomrule
\end{tabularx}
\endgroup
\end{table}

\begin{table}[!htb]
\centering
\caption{How exploration guides the two learners. Relative improvement determines eligibility for different learning objectives.}
\label{tab:reciprocal_learning_outcomes}
{
\small
\setlength{\tabcolsep}{4pt}
\renewcommand{\arraystretch}{1.12}
\begin{tabularx}{\linewidth}{@{}>{\raggedright\arraybackslash}X>{\raggedright\arraybackslash}X>{\raggedright\arraybackslash}X@{}}
\toprule
Exploration outcome & Agent learning & Specialist learning \\
\midrule
At least one sampled ranking improves the specialist & Best positive response is eligible for the agent curriculum & Instance is eligible for the verified curriculum \\

No sampled ranking improves the specialist & No response from this instance enters the agent update & Instance is eligible for the unsuccessful curriculum \\
\bottomrule
\end{tabularx}
\par\vspace{3pt}
\raggedright\footnotesize
Specialist selection uses exploration before joint selection. Agent learning uses fresh responses afterward. The specialist always trains on task labels. Uniform and uncertainty curricula, as well as budget completion, may select instances from either row.\par
}
\end{table}

\FloatBarrier
\section{Harness Engineering}
\label{app:harness}

\newtcolorbox{promptbox}[1]{
  title={#1},
  colback=white,
  colframe=black!25,
  colbacktitle=black!65,
  coltitle=white,
  colupper=black,
  fonttitle=\small\bfseries,
  fontupper=\small,
  boxrule=0.4pt,
  titlerule=0pt,
  arc=1mm,
  boxsep=0pt,
  left=3mm, right=3mm,
  top=2.5mm, bottom=2.5mm,
  toptitle=2mm, bottomtitle=2mm,
  before upper={
    \setlength{\parindent}{0pt}
    \setlength{\parskip}{4pt}
  },
  before skip=8pt,
  after skip=8pt
}

\subsection{Prompts for Recommendation}
\label{app:harness_rec}

We use a common recommendation interface for MovieLens 1M and the
Amazon Beauty, CDs, Sports, and Toys datasets. For each user, it
combines an interaction history with a candidate ranking from SASRec,
HSTU, or Wukong. Harness search controls the history length, available
item metadata, and the partition between leading predictions and
candidate alternatives. The prompt uses the same item terminology
across all five datasets.

\begin{promptbox}{Prompt: Recommendation Systems}
Use the user's interaction history and the available item metadata
to select the \{k\} candidate alternatives that best match the user's
preferences. The specialist's leading predictions provide additional
context.

\textbf{Interaction history}\par
\{recent interactions in sequence, with item titles and available metadata\}

\textbf{Specialist's leading predictions}\par
\{leading items in specialist order, with available metadata\}

\textbf{Candidate alternatives}\par
1. \{item title and available metadata\}\\
2. \{item title and available metadata\}\\
\ldots

\textbf{Output}\par
Return exactly \{k\} distinct IDs from Candidate alternatives,
ordered from best to worst and separated by spaces.
Return no other text.
\end{promptbox}

When $k=1$ and the leading block is nonempty, the following instruction
uses the same inputs and allows the agent to retain the specialist's
selection. Otherwise, the ranking prompt above applies.

\begin{promptbox}{Prompt: Recommendation with a Single Selection}
Use the user's interaction history and the available item metadata
to identify the best candidate alternative. Compare it with the
lowest ranked item among the specialist's leading predictions.

\textbf{Output}\par
If this alternative is a better match for the user, return its
integer ID. Otherwise, return 0. Return one integer and no other text.
\end{promptbox}

\begin{table}[!htb]
\centering
\caption{Recommendation harness fields. ``Fixed'' denotes a value
outside the search space.}
\label{tab:harness-reco}
\small
\setlength{\tabcolsep}{4pt}
\renewcommand{\arraystretch}{1.10}
\begin{tabularx}{\linewidth}{@{}l l l >{\raggedright\arraybackslash}X@{}}
\toprule
Field & Initial & Candidate values & Role \\
\midrule
\texttt{n\_candidates} & 50 & Fixed & Specialist shortlist size \\
\texttt{n\_history} & 15 & 5, 10, 15, 25, 40 & Number of history entries \\
\texttt{top\_k} & 5 & 1, 3, 5, 10 & Requested ranking length \\
\texttt{promote\_from} & 10 & 0, 5, 10, 20 & Leading candidates shown separately \\
\texttt{n\_promote} & 1 & Fixed & Promotion count when $k=1$ \\
\texttt{show\_year} & True & True, False & Release year metadata \\
\texttt{show\_genre} & True & True, False & Category metadata \\
\bottomrule
\end{tabularx}
\end{table}

\FloatBarrier
\Needspace{0.72\textheight}
\subsection{Prompts for Protein Function Prediction}
\label{app:harness_protein}

For protein function prediction, the candidates are Gene Ontology
terms. The prompt combines protein metadata and available annotations
with the specialist's ranking. Harness search controls the number of
annotations, the length of term definitions, and the metadata shown
to the agent. The agent returns candidate IDs using the same ranking
format as in recommendation.

\begin{promptbox}{Prompt: Protein Function Prediction}
Use the protein information, available annotations, and Gene Ontology
term descriptions to select the \{k\} candidate alternatives most
likely to apply to this protein. The specialist's leading predictions
provide additional context.

\textbf{Protein information}\par
Protein: \{protein name\}\\
Organism: \{organism, when enabled\}\\
Target ontology aspect: \{aspect, when enabled\}\\
Available annotations: \{known terms\}

\textbf{Specialist's leading predictions}\par
\{GO terms in specialist order, with available definitions\}

\textbf{Candidate alternatives}\par
1. \{GO term name and available definition\}\\
2. \{GO term name and available definition\}\\
\ldots

\textbf{Output}\par
Return exactly \{k\} distinct IDs from Candidate alternatives,
ordered from most to least likely and separated by spaces.
Return no other text.
\end{promptbox}

\begin{table}[!htb]
\centering
\caption{Presentation fields of the protein harness. Experience
selection is described in Appendix~\ref{app:hace_implementation}.}
\label{tab:harness-prot}
\small
\setlength{\tabcolsep}{4pt}
\renewcommand{\arraystretch}{1.10}
\begin{tabularx}{\linewidth}{@{}l l l >{\raggedright\arraybackslash}X@{}}
\toprule
Field & Initial & Candidate values & Role \\
\midrule
\texttt{n\_candidates} & 50 & Fixed & Specialist shortlist size \\
\texttt{n\_known} & 10 & 0, 5, 10, 20 & Number of known annotations \\
\texttt{top\_k} & 5 & 1, 3, 5, 10 & Requested ranking length \\
\texttt{promote\_from} & 10 & 0, 5, 10, 20 & Leading candidates shown separately \\
\texttt{def\_chars} & 120 & 0, 120, 300 & Definition length, zero for name only \\
\texttt{show\_aspect} & True & True, False & Target ontology aspect \\
\texttt{show\_organism} & True & True, False & Source organism \\
\bottomrule
\end{tabularx}
\end{table}

\Needspace{0.78\textheight}
\subsection{Prompts for Material Discovery}
\label{app:harness_materials}

For material discovery, the agent compares candidate crystals within
a chemical system. The prompt provides compositions and available
structure prototypes, together with the specialist's leading
predictions. Harness search controls the prototype length and whether
element information and specialist ranks are shown. Candidate IDs
identify the agent's selections independently of the specialist ranks.

\begin{promptbox}{Prompt: Crystal Stability Prediction}
Use the chemical system, candidate compositions, and available
structural descriptors to select the \{k\} candidate alternatives
most likely to be thermodynamically stable. The specialist's leading
predictions provide additional context.

\textbf{Chemical system}\par
System: \{chemical system\}\\
Elements: \{element set, when enabled\}

\textbf{Specialist's leading predictions}\par
\{formulas in specialist order, with available structure prototypes\}

\textbf{Candidate alternatives}\par
1. \{formula, structure prototype, and specialist rank when enabled\}\\
2. \{formula, structure prototype, and specialist rank when enabled\}\\
\ldots

\textbf{Output}\par
Return exactly \{k\} distinct IDs from Candidate alternatives,
ordered from most to least likely and separated by spaces.
Use the IDs at the start of the candidate rows. Return no other text.
\end{promptbox}

\begin{table}[!htb]
\centering
\caption{Presentation fields of the crystal stability harness.
Prototype lengths are measured in characters.}
\label{tab:harness-crys}
\small
\setlength{\tabcolsep}{4pt}
\renewcommand{\arraystretch}{1.10}
\begin{tabularx}{\linewidth}{@{}l l l >{\raggedright\arraybackslash}X@{}}
\toprule
Field & Initial & Candidate values & Role \\
\midrule
\texttt{n\_candidates} & 20 & Fixed & Specialist shortlist size \\
\texttt{top\_k} & 3 & 1, 2, 3, 5 & Requested ranking length \\
\texttt{promote\_from} & 2 & 0, 1, 2, 3 & Leading candidates shown separately \\
\texttt{proto\_chars} & 24 & 0, 24, 48 & Structure prototype length \\
\texttt{show\_elements} & True & True, False & Element set \\
\texttt{show\_rank} & True & True, False & Specialist rank per candidate \\
\bottomrule
\end{tabularx}
\end{table}
\end{document}